\pdfoutput=1

\documentclass[twoside]{article}

\usepackage[preprint]{aistats2027}

\usepackage{amsmath,amssymb,amsthm,mathtools}
\usepackage{graphicx}
\usepackage{booktabs}
\usepackage{multirow}
\usepackage{xcolor}
\usepackage{caption}
\usepackage{subcaption}
\usepackage{algorithm}
\usepackage{algpseudocode}
\usepackage{balance}
\usepackage{placeins}
\usepackage{needspace}
\usepackage{url}

\usepackage[round]{natbib}

\usepackage[colorlinks=true,linkcolor=black,citecolor=black,urlcolor=blue]{hyperref}
\usepackage[capitalise]{cleveref}

\theoremstyle{plain}
\newtheorem{theorem}{Theorem}
\newtheorem{proposition}[theorem]{Proposition}

\newtheorem{corollary}[theorem]{Corollary}
\theoremstyle{definition}

\newcommand{\R}{\mathbb{R}}
\newcommand{\T}{\mathbb{T}}
\newcommand{\E}{\mathbb{E}}

\newcommand{\norm}[1]{\left\lVert #1 \right\rVert}

\DeclareMathOperator*{\argmin}{arg\,min}
\DeclareMathOperator{\MMD}{MMD}

\newcommand{\spline}{\textsc{Spline-only}}

\providecommand{\Jcap}{\mathcal J}

\makeatletter\newsavebox{\nn@tabbox}
\newcommand{\nnfit}[1]{\sbox{\nn@tabbox}{#1}\ifdim\wd\nn@tabbox>\linewidth\resizebox{\linewidth}{!}{\usebox{\nn@tabbox}}\else\usebox{\nn@tabbox}\fi}\makeatother

\newcommand{\ourhead}[1]{\shortstack[c]{\textbf{#1}\\{\scriptsize(ours)}}}
\crefname{proposition}{Proposition}{Propositions}
\Crefname{proposition}{Proposition}{Propositions}

\crefname{corollary}{Corollary}{Corollaries}
\Crefname{corollary}{Corollary}{Corollaries}

\begin{document}

\runningtitle{Neural Wasserstein Splines: Representation-Invariant Second-Order Priors for Trajectory Inference}
\runningauthor{Amartya Banerjee, Harlin Lee, Caroline Moosmüller}

\twocolumn[

\aistatstitle{Neural Wasserstein Splines: Representation-Invariant Second-Order Priors for Trajectory Inference}

\aistatsauthor{%
Amartya Banerjee \And
Harlin Lee \And
Caroline Moosmüller}

\aistatsaddress{%
\footnotesize Dept. of Computer Science\\ UNC Chapel Hill \And
\footnotesize SDSS\\ UNC Chapel Hill \And
\footnotesize Dept. of Mathematics\\ UNC Chapel Hill
}

]

\begin{abstract}
    Population dynamics are often observed only as unpaired snapshots, which leave the evolution between measurements underdetermined. Second-order regularization can constrain it, but a penalty on the accelerations of sampled particles is a prior on joint particle paths, and different particle representations of the same marginal curve can have different acceleration costs. We therefore introduce Neural Wasserstein Splines (NWSP), a sample-based method that learns a continuous population curve under a Wasserstein smoothness prior. A neural network maps latent variables to the control points of a cubic spline in time, so samples at any time come from the spline basis without numerical integration. We project the generator's velocity onto gradients of a finite dictionary of scalar potentials and differentiate this measure-dependent projection in time, obtaining a finite-dimensional surrogate for the intrinsic Wasserstein acceleration, trained jointly with distributional fit losses. For a fixed dictionary and ridge parameter, we prove that the population penalty depends only on the marginal curve and bound its sensitivity to moment-estimation errors. Learned potentials (NWSP-N) sharpen acceleration estimates. On various synthetic and single-cell benchmarks, NWSP is highly competitive with transport and flow-based methods, ranking in the top two in 11 of 16 settings. The penalty improves on an unregularized spline in 13 of 16 settings and, while competitive with a particle-acceleration penalty, adds provable representation invariance.
\end{abstract}

\section{Introduction}
\label{sec:intro}

Observing a population at only a few times does not determine how it moves in between. A natural prior is that its motion changes gradually: penalizing acceleration favors nearly constant velocity, so the curve bends only where the data require it, whereas interpolating consecutive snapshots separately produces velocity jumps at every observation time. In single-cell experiments, destructive assays also mean that cells measured at successive times cannot be matched \citep{schiebinger2019optimal,tong2020trajectorynet}. Trajectory inference therefore seeks a continuous curve of distributions that explains the snapshots, and the dynamical prior determines the evolution between them.

\begin{figure*}[t]
\centering
\includegraphics[width=0.82\textwidth]{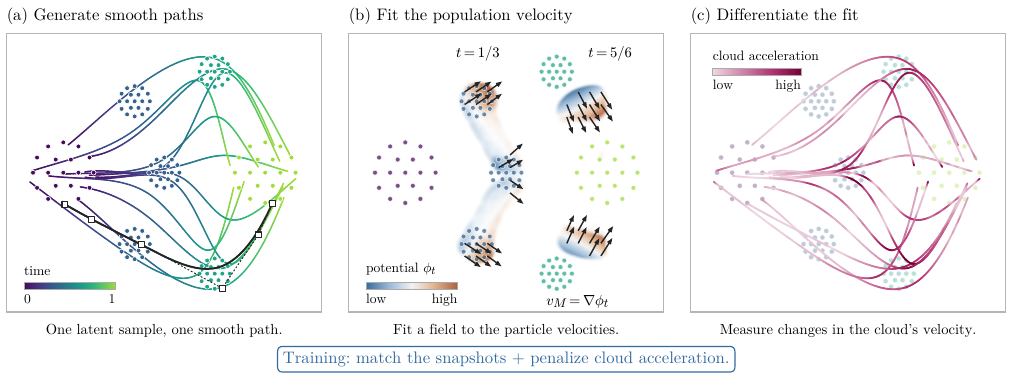}
\caption[Neural Wasserstein Splines]{\textbf{Learning a smooth population curve from unpaired snapshots.} (a) A network maps each latent sample to spline control points (squares), defining a complete path. (b) Ridge regression fits the velocity $v_M=\nabla\phi_t$ to particle velocities. Blue/orange indicates lower/higher potential $\phi_t$, scaled separately at each time; shading fades in low-density regions. (c) Differentiating the fitted field gives cloud acceleration $a_M$, whose magnitude colors the same paths. Training balances snapshot fit with mean squared cloud acceleration. NWSP-N also learns the potentials in (b).}
\label{fig:overview}
\end{figure*}

Applying an acceleration penalty to a population requires deciding what acceleration means. One option is the acceleration of the particles in a generative model, but unpaired snapshots do not identify which particle is which: many particle systems induce the same distribution at every time yet have very different accelerations (a Gaussian cloud can stay still or rotate rigidly; \cref{fig:rotation}). A particle-acceleration penalty then also penalizes an arbitrary modeling choice. The alternative is the intrinsic Wasserstein acceleration, the rate of change of the minimum-norm velocity that transports the curve of distributions, which depends only on the curve \citep{benamou2018second,justiniano2024approximation}. NWSP therefore regularizes the acceleration of the curve of distributions rather than that of any individual particle path.

Computing such curves is expensive: spline constructions need optimal transport (OT) couplings between consecutive snapshots \citep{chewi2021fast}, subdivision schemes such as Wasserstein Lane--Riesenfeld (WLR) solve repeated OT problems during refinement \citep{banerjee2025efficient}, and neural flow models generate samples by integrating a learned vector field. NWSP also uses splines, but over individual particle paths, so no transport problem is solved between snapshots; we compare scaling, not measured runtime (\cref{sec:cost}).

We introduce \emph{Neural Wasserstein Splines} (NWSP).\footnote{Code is available at \url{https://github.com/amartya21/Neural-Wasserstein-Splines.git}.} A neural network maps a random latent variable to a whole smooth particle path, a spline in time, so the population can be sampled at any time without integration or cross-time OT couplings. We estimate the population velocity from these paths, differentiate it in time to obtain the population acceleration, and train the network to fit the snapshots while keeping this acceleration small. NWSP-N additionally learns the potentials used to estimate the velocity.

\paragraph{Contributions.}
\textbf{(i) Generative splines without cross-time OT.} NWSP learns from minibatches of the full snapshots and samples at any time without numerical integration, using full training support on three datasets where exact-transport splines require subsampling.
\textbf{(ii) A differentiated population-acceleration surrogate.} We prove representation invariance for fixed potentials, bound the surrogate's sensitivity to moment errors, establish Fourier approximation rates, and show that omitting the moving-Gram correction leaves a nonvanishing bias.
\textbf{(iii) Adaptive potential estimation.} NWSP-N learns the potentials through a separate velocity-fitting objective; matched-size controls isolate the gain from adaptation and generator ablations evaluate the acceleration prior.

\subsection{Related work}
\label{sec:related}
\paragraph{Wasserstein splines.}
\citet{benamou2018second} formulate intrinsic second-order models and a relaxation over particle paths, and \citet{justiniano2024approximation} discretize intrinsic splines with local Wasserstein distances and generalized barycenters. \citet{chewi2021fast} interpolate in Euclidean space after constructing OT correspondences, while WLR and four-point schemes refine curves by repeated geodesic averaging \citep{banerjee2025efficient}. NWSP estimates population acceleration through a system indexed by potentials rather than observed particles.

\paragraph{Neural population dynamics.}
TrajectoryNet and MIOFlow fit flows with transport and geometric constraints \citep{tong2020trajectorynet,huguet2022manifold}. Flow and bridge matching regress fields against conditional-path targets \citep{lipman2023flow,tong2024improving,tong2024simulation}; Metric Flow Matching and PACE incorporate data geometry \citep{kapusniak2024metric,yu2026pace}. SplineFlow and MMFM use spline interpolants as flow-matching targets \citep{rathod2026splineflow,rohbeck2025modeling}; ALI-CFM learns interpolants adversarially before fitting a velocity field \citep{kviman2026multimarginal}. NWSP uses the spline itself to generate samples, with no second generative field.

\paragraph{Smoothness and velocity estimation.}
Momentum bridges and acceleration matching model position--velocity dynamics \citep{chen2023deep,theodoropoulos2025momentum,dazzini2026trajectory,sun2026tracingflow}; smooth bridges and Wassersplines impose smoothness through reference processes or field penalties \citep{hong2025trajectory,zhang2022wassersplines}. \citet{guan2026lagrangian} learn population mechanics from a Wasserstein action principle. NWSP instead uses population acceleration as a regularizer for a direct spline generator. Its velocity fit builds on Action Matching and Wasserstein Lagrangian Flows \citep{neklyudov2023action,neklyudov2024computational}; differentiating the measure-dependent projection yields its second-order penalty.

\section{Preliminaries}

\label{sec:problem}
At times $t_1<\cdots<t_T$, we observe point clouds $\{x_{k,i}\}_{i=1}^{n_k}\subset\R^d$ with empirical measures $\hat\rho_{t_k}=n_k^{-1}\sum_i\delta_{x_{k,i}}$, with no correspondence between samples observed at different times. We rescale time to $[0,1]$ and write $\mathcal R$ for the indices retained for training. We aim to infer a probability-measure curve $\rho_t$ with finite second moments by distributional, rather than pointwise or exact, fitting.

\paragraph{Intrinsic velocity and acceleration.}
For a sufficiently regular curve, the Wasserstein tangent velocity \(v_t(x)\) is the minimum-\(L^2(\rho_t)\)-norm solution of the continuity equation \citep{ambrosio2008gradient},
\begin{equation}
 \partial_t\rho_t+\nabla\!\cdot(\rho_t v_t)=0.
 \label{eq:continuity}
\end{equation}
The velocity \(v_t\) belongs to the \(L^2(\rho_t)\)-closure of gradient vector fields; in the smooth setting, we may write \(v_t(x)=\nabla_x\phi_t(x)\). Its intrinsic acceleration is the material derivative of the velocity field,
\begin{align}
 a_t(x)
 &=\partial_t v_t(x)+D_xv_t(x)\,v_t(x) \label{eq:covacc}\\
 &=\nabla_x\!\left(\partial_t\phi_t(x)+\tfrac12\|\nabla_x\phi_t(x)\|^2\right).
\end{align}
Thus, the spatial derivative is taken in the direction \(v_t(x)\) of the flow itself~\citep{otto2001geometry,justiniano2024approximation}. The corresponding acceleration energy
\[
\mathcal A(\rho)=\int_0^1\|a_t\|_{\rho_t}^2\,dt,
\qquad
\|h\|_{\rho_t}^2=\int\|h(x)\|^2\,d\rho_t(x),
\]
depends only on the full marginal curve \((\rho_t)_{t\in[0,1]}\).

\paragraph{Identical marginals do not imply identical paths.}
Consider a random path $X_t$ with law $\rho_t$. The conditional velocity $\E[\dot X_t\mid X_t=x]$ satisfies \cref{eq:continuity}, but may include motion that leaves the distribution unchanged. For $Z\sim\mathcal N(0,I_2)$, compare $X_t^{\rm stat}=Z$ and $X_t^{\rm rot}=R(\omega t)Z$, where $R(\alpha)$ rotates the plane by angle $\alpha$. Both have law $\mathcal N(0,I_2)$ at \emph{every} time because Gaussian is rotationally symmetric. Yet $\ddot X_t^{\rm stat}=0$ and $\ddot X_t^{\rm rot}=-\omega^2X_t^{\rm rot}$, giving
\begin{equation}
 \E\|\ddot X_t^{\rm stat}\|^2=0,\qquad
 \E\|\ddot X_t^{\rm rot}\|^2=2\omega^4.
 \label{eq:rotwitness}
\end{equation}
The common marginal curve is stationary, so $v_t=a_t=0$. Thus, the nonzero pathwise acceleration energy measures rotation, not population change.

\begin{figure}[t]
\centering
\includegraphics[width=0.78\linewidth]{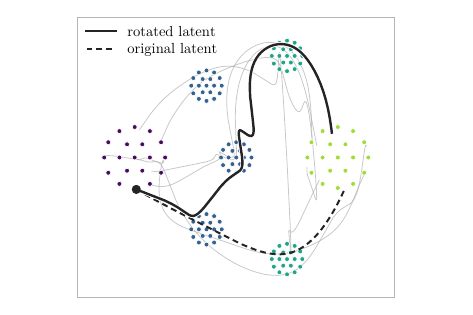}
\caption{\textbf{Same population, different particle paths.} Rotating the latent changes particle paths (solid versus dashed) but preserves every marginal. Dots show the observed snapshots; gray lines show other rotated paths. Particle acceleration thus depends on the representation, whereas the population acceleration $a_M$ is unchanged for a fixed dictionary.}
\label{fig:rotation}
\end{figure}

More generally, for $Z\sim\mathcal N(0,I)$, the paths $G_t(Z)$ and $G_t(R_tZ)$ share every marginal for any time-dependent generator $G_t$ and orthogonal $R_t$ (\cref{fig:rotation}). A prior on the population should therefore give all such representations the same penalty, even when the common marginal curve evolves.

\section{Neural Wasserstein Splines (NWSP)}
\label{sec:method}\label{sec:nnmethod}
NWSP models population dynamics using smooth generative particle paths. Their analytic derivatives are used to fit a population-level velocity field, which is then differentiated to approximate the intrinsic population acceleration. The generator is trained to match the observed snapshots while penalizing this acceleration; NWSP-N extends the method by augmenting the fixed velocity basis with learned potential gradients. \Cref{fig:overview} gives an overview.

\subsection{A spline-based generative model}
\label{sec:generator}
We first describe a generative model for random particle paths \(X_\theta(t,Z)\), where \(Z\sim\mathcal N(0,I_{d_z})\) is a latent draw and \(\theta\) denotes neural network parameters. We write \(X_t\) when the dependence on \(\theta\) and \(Z\) is clear.

More specifically, a neural network $C_\theta:\R^{d_z}\to\R^{J\times d}$ maps $Z$ to $J=|\mathcal R|+2$ control points, which are vectors in $\mathbb{R}^d$ that shape the generated particle trajectory. With knots at the times indexed by $\mathcal{R}$ and clamped cubic B-spline basis functions $B_{j,3}$ mapping $t$ to $\mathbb{R}$,
\begin{equation}
 X_t=\sum_{j=1}^J B_{j,3}(t)C_\theta(Z)_j,
 \qquad \rho_t=\operatorname{Law}(X_t).
 \label{eq:gen}
\end{equation}
A single fixed latent draw \(Z=z\) determines the entire deterministic particle path \(t\mapsto X_\theta(t,z)\), while randomness over \(Z\) induces the marginal distribution \(\rho_t=\operatorname{Law}(X_\theta(t,Z))\). Simple interior knots give $C^2$ paths with analytic derivatives
\begin{equation}
 \begin{aligned}
 U_t:=\dot X_t&=\sum_j\dot B_{j,3}(t)C_\theta(Z)_j,\\
 W_t:=\ddot X_t&=\sum_j\ddot B_{j,3}(t)C_\theta(Z)_j.
 \end{aligned}
 \label{eq:UW}
\end{equation}
A single forward pass of \(C_\theta\) thus gives particle positions, velocities, and accelerations analytically, without velocity measurements or numerical integration. %

\subsection{From particles to population velocity}
\label{sec:projection}

Next, we construct a population-level velocity vector field from the positions and velocities generated by the spline model. %
Let $f_1,\ldots,f_M: \mathbb{R}^d \mapsto \mathbb{R}$ be smooth potentials and their gradients
$F(x)=[\nabla f_1(x)\cdots\nabla f_M(x)]\in\R^{d\times M}$.
We approximate the particle velocity $U_t$ within a span of finite basis:
\begin{equation} \label{eq:vm}
   v_M(t,x)=F(x)c(t) \text{ with coefficients } c(t)\in\R^M.
\end{equation} %
We solve this as a regression problem. Suppressing time arguments, define matrices
\begin{equation}
 A=\E[F(X_t)^\top F(X_t)],\qquad
 b=\E[F(X_t)^\top U_t].
 \label{eq:moments}
\end{equation}
For ridge parameter $\lambda>0$, the fitted
\begin{equation}
 c=\argmin_{q\in\R^M}\{\E\|F(X_t)q-U_t\|^2+\lambda\|q\|^2\}
 \label{eq:am_objective}
\end{equation}
satisfies
\begin{equation}
 (A+\lambda I)c=b.
 \label{eq:solve_c}
\end{equation}
\paragraph{Relation to Action Matching.}
The continuity equation \eqref{eq:continuity} has a weak form: for every smooth test function $f$,
\[
 \frac{d}{dt}\int f\,d\rho_t=\int\nabla f\cdot v_t\,d\rho_t,
\]
so the velocity can be probed through test functions without solving the PDE. Action Matching (AM) \citep{neklyudov2023action} turns this identity into a learning problem: it fits a neural potential $s_t(x)$, with $v_t=\nabla s_t$, to samples of $\rho_t$ at many times, and since samples carry no velocities its objective must differentiate the learned potential in time.
Our setting differs in two ways. The generator supplies particle paths, so the left side is available in closed form: for $f=f_j$ the chain rule gives $\E[\nabla f_j(X_t)\cdot U_t]=b_j$, with no time derivative of any learned function. And the potentials are a fixed finite family, so the fit is the ridge regression above, AM's least-squares objective restricted to $\operatorname{span}\{\nabla f_j\}$ and solved at each $t$. Regressing $v_t$ instead of $U_t$ gives the same normal equations, because the non-gradient part of $\E[U_t\mid X_t]$ is orthogonal to the span. The velocity estimate is thus not new; the new step is differentiating its solution in time along the moving population (\cref{sec:moving}).
At finite $M$ and nonzero ridge, $v_M$ need not solve the continuity equation exactly. NWSP uses $M_0$ fixed linear, quadratic, and sinusoidal potentials, with $M=M_0$, and estimates expectations by minibatches.

\subsection{From velocity to population acceleration}
\label{sec:moving}
The fitted population velocity \(v_M(t,x)\) changes with time because the regression moments \(A(t)\) and \(b(t)\) are evaluated along the time-varying particle paths \(X_t\). To differentiate these moments, let \(D_xF(x)[u]\) denote the directional derivative of \(F\) at \(x\) in direction \(u\). Since \(\dot X_t=U_t\), the chain rule gives
\[
\dot F_t:=\frac{d}{dt}F(X_t)=D_xF(X_t)[U_t].
\] With $F=F(X_t)$ inside expectations,
\begin{equation}
 \begin{aligned}
 \dot A&=\E[\dot F^\top F+F^\top\dot F],\\
 \dot b&=\E[\dot F^\top U_t+F^\top W_t].
 \end{aligned}
 \label{eq:dmoments}
\end{equation}
Differentiating \eqref{eq:solve_c} at fixed $(f_i)_{i=1}^M$ and $\lambda$ yields
\begin{equation}
 (A+\lambda I)\dot c=\dot b-\dot A c.
 \label{eq:solve_cdot}
\end{equation}
The correction \(-\dot A c\) arises because the regression Gram matrix \(A(t)\) changes with time. Taking material derivative of \eqref{eq:vm} yields the following approximation to the intrinsic acceleration field \eqref{eq:covacc}:
\begin{equation}
 a_M(t,x)=F(x)\dot c+D_xF(x)[F(x)c]c.
 \label{eq:accel}
\end{equation}
The population acceleration is therefore obtained by differentiating the fitted velocity field \(v_M\), not by projecting the particle acceleration \(W_t\) as \(U_t\) was projected to obtain \(v_M\): in the rotating-Gaussian example of Section~\ref{sec:problem}, \(W_t\) is nonzero although the marginal velocity and intrinsic acceleration vanish.

\subsection{Training objective and sampling}
\label{sec:objective}
Write the approximated acceleration energy as $\mathcal E(\theta;F)=\int_0^1\|a_M(t,\cdot)\|_{\rho_t}^2dt$. Set training objective
\begin{equation}
\begin{aligned}
 \mathcal L_{\rm fit}(\theta)&=\frac1{|\mathcal R|}\sum_{k\in\mathcal R}
 \left[S_\varepsilon(\rho_{t_k},\hat\rho_{t_k})
       +\gamma\MMD^2(\rho_{t_k},\hat\rho_{t_k})\right],\\
 \mathcal L(\theta;F)&=\mathcal L_{\rm fit}(\theta)+\lambda_a\mathcal E(\theta;F)/L^2.
\end{aligned}
\label{eq:loss}
\end{equation}
Here $S_\varepsilon$ is the debiased Sinkhorn divergence with entropic hyperparameter $\varepsilon$ \citep{cuturi2013sinkhorn,feydy2019interpolating}, and MMD uses Gaussian kernels \citep{gretton2012kernel}. $L^2$ is a fixed squared root-mean-square radius of the pooled retained data. Training first fits the observed snapshots, after which the acceleration-regularization weight $\lambda_a^{(\ell)}$ is gradually increased from zero to $\lambda_a$. See \Cref{alg:step}. %

At each sampled time \(t\in\mathcal T\), we draw two independent latent minibatches \((Z_i)_{i=1}^B\) and \((\widetilde Z_i)_{i=1}^B\). The first batch generates the particles used to estimate \(\widehat A,\widehat b,\widehat{\dot A},\widehat{\dot b}\); the second generates evaluation particles \(\widetilde X_{t,i}=X_\theta(t,\widetilde Z_i)\), on which the acceleration energy is computed:
\begin{equation}
 \widehat{\mathcal E}=\frac1{|\mathcal T|B}
 \sum_{t\in\mathcal T}\sum_{i=1}^B
 \|\hat a_M(t,\widetilde X_{t,i})\|^2.
 \label{eq:energy_estimate}
\end{equation}
Optimization details, schedules, normalization, and batch sizes are in \cref{sec:nnimpl}; computational cost is discussed in \cref{sec:cost}.

\begin{algorithm}[t]
\caption{NWSP and NWSP-N training ($\lambda_a>0$)}
\label{alg:step}\label{alg:nnstep}
\footnotesize
\begin{algorithmic}[1]
\Require Retained snapshots, $C_\theta$, dictionary $F$, $\lambda$, $\lambda_a$, $L^2$, batch size $B$
\State Warm up $C_\theta$; if $K>0$, pretrain $g_\psi$ using $\widehat{\mathcal Q}$
\For{each subsequent optimization step $\ell$}
  \State Sample times $\mathcal T$ and snapshot minibatches
  \State With $\theta$ fixed, update $\psi$ to maximize $\widehat{\mathcal Q}$ if $K>0$
  \State Fix $F$ and its scales; compute minibatch $\widehat{\mathcal L}_{\rm fit}$
  \For{each $t\in\mathcal T$}
    \State Draw projection and evaluation batches
    \State Estimate $\widehat A,\widehat b,\widehat{\dot A},\widehat{\dot b}$ using \cref{eq:moments,eq:dmoments}
    \State Factor $\widehat H=\widehat A+\lambda I$; solve $\widehat H\hat c=\widehat b$
    \State Reuse the factorization: $\widehat H\widehat{\dot c}=\widehat{\dot b}-\widehat{\dot A}\hat c$
    \State Evaluate $\hat a_M$ on $\widetilde X_{t,i}$ using \cref{eq:accel}
  \EndFor
  \State Form $\widehat{\mathcal E}$ using \cref{eq:energy_estimate}
  \State Update $\theta$ to minimize $\widehat{\mathcal L}_{\rm fit}+\lambda_a^{(\ell)}\widehat{\mathcal E}/L^2$, differentiating through both solves
\EndFor
\end{algorithmic}
\end{algorithm}

\subsection{NWSP-N: Expanding the basis}
\label{sec:nncritic}\label{sec:nndict}
We propose an extension of NWSP that augments the fixed potential basis in \cref{sec:projection} with learned potentials. NWSP-N adds $K$ scalar potentials from a neural network $g_\psi:\R^d\to\R^K$:
\begin{equation}
 F_\psi(x)=\big[\nabla f_1\cdots\nabla f_{M_0}\mid
                 \nabla g_{\psi,1}\cdots\nabla g_{\psi,K}\big].
 \label{eq:nndict}
\end{equation}

With \(\theta\) fixed, NWSP-N learns the added potentials so that their gradients capture as much of the population velocity field as possible. At each time \(t\), define the regularized explained velocity energy
\begin{equation}
 \begin{aligned}
 \Jcap_r(\psi,t)
 &=b^\top(A+\lambda I)^{-1}b\\
 &=\|v_t\|_{\rho_t}^2
 -\min_{q\in\mathbb R^M}
 \left\{
 \|v_t-F_\psi q\|_{\rho_t}^2+\lambda\|q\|^2
 \right\}.
 \end{aligned}
 \label{eq:critic}
\end{equation}
This measures how much of the population velocity energy can be represented by $F_\psi$.
We learn \(\psi\) by maximizing this quantity across time. To prevent times with larger particle speeds from dominating the objective, we normalize by the spline-particle velocity scale,
\begin{equation}
 \mathcal Q(\psi;\theta)
 =
 \int_0^1
 \frac{\Jcap_r(\psi,t)}
 {\max(\E\|U_t\|^2,\epsilon_U)}
 \,dt,
 \label{eq:potential_learning}
\end{equation}
The two-independent-batch estimator \(\widehat{\mathcal Q}\) used in practice is given in \cref{sec:nnimpl}. The learned gradient features are additionally normalized by their root-mean-square magnitude to control feature scaling.

NWSP-N alternates between updating \(g_\psi\) and updating \(C_\theta\); see \cref{alg:nnstep}. Potential updates hold \(\theta\) fixed and improve the feature basis so that it captures the population velocity more accurately. Generator updates instead hold \(\psi\) and the feature-normalization scales fixed, and update \(\theta\) to fit the data. We do not minimize the acceleration penalty with respect to \(\psi\), since doing so could favor features that simply fail to represent the underlying motion and therefore make the measured acceleration artificially small.

\subsection{Theoretical guarantees}
\label{sec:theory}\label{sec:nntheory}
The surrogate is useful as a population prior only if it does not depend on the particle representation, is stable to estimation error, and can resolve the motion of interest; we address these three points in turn. Statements and proofs are in \cref{sec:proofs}. They rely on a regularity condition (R) stated there: twice-differentiable paths, thrice-differentiable potentials, and integrable bounds that allow time derivatives inside expectations. No density is assumed; the Fourier approximation result additionally assumes the periodic setting and positive densities stated there.

\paragraph{Does the penalty depend on how the particles are generated?} Not for a fixed dictionary.
\begin{proposition}[Representation invariance]
\label[proposition]{prop:invariance}
Fix a time-independent dictionary and ridge parameter $\lambda>0$. Two lifts satisfying (R) and inducing the same marginal curve yield identical $A,b,\dot A,\dot b$, hence identical $v_M,a_M$ and, whenever finite, population acceleration energy.
\end{proposition}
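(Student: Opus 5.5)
The plan is to express each of $A,b,\dot A,\dot b$ as a functional of the marginal curve $(\rho_t)$ alone, through integrals of $t$-independent test functions against $\rho_t$ and their time derivatives. The matrix $A(t)=\int F^\top F\,d\rho_t$ depends only on $\rho_t$. For $b$, I will use the chain rule on a scalar potential. Since $F^\top U_t$ has $j$-th entry $\nabla f_j(X_t)\cdot \dot X_t=\frac{d}{dt}f_j(X_t)$, condition (R) lets me differentiate under the expectation, giving
\[
b_j(t)=\frac{d}{dt}\E[f_j(X_t)]=\frac{d}{dt}\int f_j\,d\rho_t .
\]
Both are therefore determined by the marginals. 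Note that $b$ is a time derivative of a marginal quantity, not merely a marginal integral.

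Next comes $\dot A$. Since $A(t)=\int F^\top F\,d\rho_t$ is a marginal functional for every $t$, its derivative $\dot A$ is also a marginal functional. Regularity (R) guarantees that it equals $\E[\dot F^\top F+F^\top\dot F]$ as in \eqref{eq:dmoments}, with differentiability ensured by the integrable bounds.

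For $\dot b$, I differentiate the identity $b_j(t)=\frac{d}{dt}\int f_j\,d\rho_t$ once more, which gives $\dot b_j=\frac{d^2}{dt^2}\int f_j\,d\rho_t$. The main obstacle is justifying that this equals the pathwise expression $\E[\dot F^\top U_t+F^\top W_t]$. Component-wise, that expression is $\E[\frac{d^2}{dt^2}f_j(X_t)]=\E[D^2f_j(X_t)[U_t,U_t]+\nabla f_j(X_t)\cdot W_t]$, and (R) (twice-differentiable paths, $C^3$ potentials, dominating integrable bounds) permits exchanging both derivatives with the expectation. The key point is that the $\dot F^\top U_t$ term is quadratic in $U_t$ and depends on the joint law of $(X_t,U_t)$, so individually it is not a marginal quantity. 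Only the sum combines into a second derivative of a marginal integral. I would make this explicit so the reader sees why both terms are needed.

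Finally, with $\lambda>0$ the matrix $A+\lambda I$ is positive definite, so $c$ and $\dot c$ are uniquely determined by \eqref{eq:solve_c} and \eqref{eq:solve_cdot} from identical data. Hence $v_M=Fc$ and $a_M$ from \eqref{eq:accel} coincide as functions of $(t,x)$. The energy $\int_0^1\int\|a_M(t,x)\|^2d\rho_t\,dt$ then integrates the same function against the same marginals, so it agrees whenever finite.
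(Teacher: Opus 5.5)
Your proposal is correct and follows essentially the same route as the paper: identify $A_{ij}=\int\nabla f_i\cdot\nabla f_j\,d\rho_t$, $b_j=\frac{d}{dt}\int f_j\,d\rho_t$ and $\dot b_j=\frac{d^2}{dt^2}\int f_j\,d\rho_t$ via the chain rule and (R), obtain $\dot A$ by differentiating $A$, and conclude from the unique solvability of the two ridge systems since $A+\lambda I\succeq\lambda I$. Your added remark that $\E[\dot F^\top U_t]$ alone is not a marginal quantity, and only its sum with $\E[F^\top W_t]$ is, makes explicit a point the paper leaves implicit.
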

The reason is that $A_{ij}=\int\nabla f_i\cdot\nabla f_j\,d\rho_t$ and $b_j=\partial_t\int f_j\,d\rho_t$, and hence their time derivatives, are averages over the population and never refer to which particle is which. The same holds for any fixed learned dictionary. It does not extend to the full NWSP-N procedure, because the potential-learning objective \eqref{eq:potential_learning} is normalized by the path-dependent particle velocity energy $\E\|U_t\|^2$, and it is an exact statement about populations, not about finite batches.

\paragraph{How accurate is the estimate, and why differentiate the projection?}
\Cref{thm:stability} bounds the error of $a_M$ by the errors in all four moments $A,b,\dot A,\dot b$, not only those of the velocity fit, because the acceleration error involves time and space derivatives of the velocity error (\cref{eq:accdecomp} in \cref{sec:proofs}). For a regular reference curve with exact moments (Sobolev regularity $s>1+d/2$, positive densities, ridge $\lambda_N=O(N^{-s})$), a Fourier dictionary with cutoff $N$ has velocity error $O(N^{-s})$ and acceleration error $O(N^{1-s})$ (\cref{thm:consistency}); these rates do not cover the trained generator or a learned dictionary. Differentiating the projection is also necessary: dropping $-\dot A c$ in \cref{eq:solve_cdot} adds $F(A+\lambda I)^{-1}\dot A c$ to the acceleration, which converges to $\Pi_{\rho_t}\big[(\partial_t\rho_t/\rho_t)v_t\big]$, where $\Pi_{\rho_t}$ is the weighted projection onto gradient fields (\cref{cor:gram_bias}). This field need not vanish, so enlarging the dictionary does not repair the omission.

\paragraph{What can a finite dictionary resolve?}
Invariance does not imply adequate resolution. Linear and quadratic potentials cannot see shape changes that preserve the mean and variance; \cref{sec:mixtureproof} gives a Gaussian-mixture example in which they report zero acceleration for a changing distribution. Learning the potentials addresses this limitation of the projection space; we assess it empirically in \cref{sec:adaptive}, as the theory does not guarantee that the learning rule resolves every such change.

\begin{table*}[!t]
\centering
\footnotesize
\caption{\textbf{Held-out $W_1$} (lower is better): mean over three seeds, with one sample SD below. WLR and 4-point are deterministic (no spread); their subsampled entries average three draws. Bold and underline mark the best and second-best means, compared before rounding; the last rows count best and top-two results. NWSP and NWSP-N are ours. Withheld snapshots are scored at full support (\cref{sec:scalability}).}
\label{tab:main}
\setlength{\tabcolsep}{3.5pt}
\renewcommand{\arraystretch}{0.92}
\nnfit{%
\begin{tabular}{@{}lrrrrrrrrr@{}}
\toprule
Dataset & WLR & 4-point & OT-CFM & SB-CFM & CFM & {[SF]}$^{2}$M & MFM & \ourhead{NWSP} & \ourhead{NWSP-N} \\
\midrule
\multicolumn{10}{@{}l}{\emph{Leave-one-out}} \\
\shortstack[l]{Diverging Gauss.\\{\scriptsize\vphantom{$\pm 0$}}} & \shortstack[r]{$1.194$\\{\scriptsize \vphantom{$\pm 0$}}} & \shortstack[r]{$1.193$\\{\scriptsize \vphantom{$\pm 0$}}} & \shortstack[r]{$1.173$\\{\scriptsize $\pm\,0.035$}} & \shortstack[r]{$1.192$\\{\scriptsize $\pm\,0.038$}} & \shortstack[r]{$1.275$\\{\scriptsize $\pm\,0.022$}} & \shortstack[r]{$1.164$\\{\scriptsize $\pm\,0.014$}} & \shortstack[r]{$1.299$\\{\scriptsize $\pm\,0.116$}} & \shortstack[r]{$\underline{1.136}$\\{\scriptsize $\pm\,0.017$}} & \shortstack[r]{$\mathbf{1.117}$\\{\scriptsize $\pm\,0.026$}}  \\
\addlinespace[0pt]
\shortstack[l]{Petal\\{\scriptsize\vphantom{$\pm 0$}}} & \shortstack[r]{$\underline{0.055}$\\{\scriptsize \vphantom{$\pm 0$}}} & \shortstack[r]{$\mathbf{0.052}$\\{\scriptsize \vphantom{$\pm 0$}}} & \shortstack[r]{$0.063$\\{\scriptsize $\pm\,0.003$}} & \shortstack[r]{$0.064$\\{\scriptsize $\pm\,0.002$}} & \shortstack[r]{$0.174$\\{\scriptsize $\pm\,0.037$}} & \shortstack[r]{$0.092$\\{\scriptsize $\pm\,0.004$}} & \shortstack[r]{$0.084$\\{\scriptsize $\pm\,0.005$}} & \shortstack[r]{$0.090$\\{\scriptsize $\pm\,0.014$}} & \shortstack[r]{$0.085$\\{\scriptsize $\pm\,0.009$}}  \\
\addlinespace[0pt]
\shortstack[l]{Converging Gauss.\\{\scriptsize\vphantom{$\pm 0$}}} & \shortstack[r]{$0.453$\\{\scriptsize \vphantom{$\pm 0$}}} & \shortstack[r]{$0.438$\\{\scriptsize \vphantom{$\pm 0$}}} & \shortstack[r]{$0.443$\\{\scriptsize $\pm\,0.012$}} & \shortstack[r]{$\underline{0.434}$\\{\scriptsize $\pm\,0.012$}} & \shortstack[r]{$0.483$\\{\scriptsize $\pm\,0.006$}} & \shortstack[r]{$\mathbf{0.431}$\\{\scriptsize $\pm\,0.012$}} & \shortstack[r]{$0.435$\\{\scriptsize $\pm\,0.044$}} & \shortstack[r]{$0.452$\\{\scriptsize $\pm\,0.022$}} & \shortstack[r]{$0.469$\\{\scriptsize $\pm\,0.025$}}  \\
\addlinespace[0pt]
\shortstack[l]{Embryoid body\\{\scriptsize\vphantom{$\pm 0$}}} & \shortstack[r]{$0.428$\\{\scriptsize \vphantom{$\pm 0$}}} & \shortstack[r]{$0.407$\\{\scriptsize \vphantom{$\pm 0$}}} & \shortstack[r]{$0.472$\\{\scriptsize $\pm\,0.023$}} & \shortstack[r]{$0.464$\\{\scriptsize $\pm\,0.022$}} & \shortstack[r]{$0.495$\\{\scriptsize $\pm\,0.019$}} & \shortstack[r]{$0.464$\\{\scriptsize $\pm\,0.015$}} & \shortstack[r]{$0.397$\\{\scriptsize $\pm\,0.025$}} & \shortstack[r]{$\mathbf{0.350}$\\{\scriptsize $\pm\,0.007$}} & \shortstack[r]{$\underline{0.351}$\\{\scriptsize $\pm\,0.003$}}  \\
\addlinespace[0pt]
\shortstack[l]{CITE-seq\\{\scriptsize\vphantom{$\pm 0$}}} & \shortstack[r]{$31.062$\\{\scriptsize \vphantom{$\pm 0$}}} & \shortstack[r]{$31.688$\\{\scriptsize \vphantom{$\pm 0$}}} & \shortstack[r]{$32.422$\\{\scriptsize $\pm\,0.102$}} & \shortstack[r]{$32.252$\\{\scriptsize $\pm\,0.150$}} & \shortstack[r]{$33.246$\\{\scriptsize $\pm\,0.536$}} & \shortstack[r]{$33.960$\\{\scriptsize $\pm\,0.223$}} & \shortstack[r]{$31.465$\\{\scriptsize $\pm\,0.115$}} & \shortstack[r]{$\underline{30.545}$\\{\scriptsize $\pm\,0.106$}} & \shortstack[r]{$\mathbf{30.527}$\\{\scriptsize $\pm\,0.093$}}  \\
\addlinespace[0pt]
\shortstack[l]{Circular Gauss.\\{\scriptsize\vphantom{$\pm 0$}}} & \shortstack[r]{$1.961$\\{\scriptsize \vphantom{$\pm 0$}}} & \shortstack[r]{$1.406$\\{\scriptsize \vphantom{$\pm 0$}}} & \shortstack[r]{$1.545$\\{\scriptsize $\pm\,0.067$}} & \shortstack[r]{$1.579$\\{\scriptsize $\pm\,0.023$}} & \shortstack[r]{$1.560$\\{\scriptsize $\pm\,0.052$}} & \shortstack[r]{$1.592$\\{\scriptsize $\pm\,0.048$}} & \shortstack[r]{$0.886$\\{\scriptsize $\pm\,0.119$}} & \shortstack[r]{$\underline{0.427}$\\{\scriptsize $\pm\,0.015$}} & \shortstack[r]{$\mathbf{0.426}$\\{\scriptsize $\pm\,0.014$}}  \\
\addlinespace[0pt]
\shortstack[l]{Dyngen Tree\\{\scriptsize\vphantom{$\pm 0$}}} & \shortstack[r]{$2.454$\\{\scriptsize \vphantom{$\pm 0$}}} & \shortstack[r]{$2.444$\\{\scriptsize \vphantom{$\pm 0$}}} & \shortstack[r]{$2.489$\\{\scriptsize $\pm\,0.056$}} & \shortstack[r]{$2.516$\\{\scriptsize $\pm\,0.112$}} & \shortstack[r]{$\underline{2.391}$\\{\scriptsize $\pm\,0.088$}} & \shortstack[r]{$2.481$\\{\scriptsize $\pm\,0.046$}} & \shortstack[r]{$2.517$\\{\scriptsize $\pm\,0.054$}} & \shortstack[r]{$\mathbf{2.388}$\\{\scriptsize $\pm\,0.031$}} & \shortstack[r]{$2.409$\\{\scriptsize $\pm\,0.006$}}  \\
\addlinespace[0pt]
\shortstack[l]{Dyngen Cycle\\{\scriptsize\vphantom{$\pm 0$}}} & \shortstack[r]{$0.878$\\{\scriptsize \vphantom{$\pm 0$}}} & \shortstack[r]{$0.758$\\{\scriptsize \vphantom{$\pm 0$}}} & \shortstack[r]{$0.685$\\{\scriptsize $\pm\,0.006$}} & \shortstack[r]{$0.690$\\{\scriptsize $\pm\,0.012$}} & \shortstack[r]{$0.734$\\{\scriptsize $\pm\,0.010$}} & \shortstack[r]{$0.688$\\{\scriptsize $\pm\,0.023$}} & \shortstack[r]{$0.676$\\{\scriptsize $\pm\,0.008$}} & \shortstack[r]{$\mathbf{0.607}$\\{\scriptsize $\pm\,0.012$}} & \shortstack[r]{$\underline{0.609}$\\{\scriptsize $\pm\,0.011$}}  \\
\addlinespace[0pt]
\shortstack[l]{Mouse hemat.\\{\scriptsize\vphantom{$\pm 0$}}} & \shortstack[r]{$0.406$\\{\scriptsize \vphantom{$\pm 0$}}} & \shortstack[r]{$0.402$\\{\scriptsize \vphantom{$\pm 0$}}} & \shortstack[r]{$0.412$\\{\scriptsize $\pm\,0.009$}} & \shortstack[r]{$0.413$\\{\scriptsize $\pm\,0.022$}} & \shortstack[r]{$0.524$\\{\scriptsize $\pm\,0.012$}} & \shortstack[r]{$0.401$\\{\scriptsize $\pm\,0.027$}} & \shortstack[r]{$\mathbf{0.363}$\\{\scriptsize $\pm\,0.009$}} & \shortstack[r]{$\underline{0.399}$\\{\scriptsize $\pm\,0.007$}} & \shortstack[r]{$0.401$\\{\scriptsize $\pm\,0.010$}}  \\
\addlinespace[0pt]
\shortstack[l]{Veres $\beta$-cell\\{\scriptsize\vphantom{$\pm 0$}}} & \shortstack[r]{$8.191$\\{\scriptsize \vphantom{$\pm 0$}}} & \shortstack[r]{$8.353$\\{\scriptsize \vphantom{$\pm 0$}}} & \shortstack[r]{$8.414$\\{\scriptsize $\pm\,0.051$}} & \shortstack[r]{$8.535$\\{\scriptsize $\pm\,0.013$}} & \shortstack[r]{$8.873$\\{\scriptsize $\pm\,0.022$}} & \shortstack[r]{$8.461$\\{\scriptsize $\pm\,0.044$}} & \shortstack[r]{$8.121$\\{\scriptsize $\pm\,0.025$}} & \shortstack[r]{$\mathbf{7.935}$\\{\scriptsize $\pm\,0.008$}} & \shortstack[r]{$\underline{7.942}$\\{\scriptsize $\pm\,0.014$}}  \\
\addlinespace[0pt]
\midrule
\multicolumn{10}{@{}l}{\emph{Leave-alternate-out}} \\
\shortstack[l]{Petal\\{\scriptsize\vphantom{$\pm 0$}}} & \shortstack[r]{$\mathbf{0.040}$\\{\scriptsize \vphantom{$\pm 0$}}} & \shortstack[r]{$\underline{0.045}$\\{\scriptsize \vphantom{$\pm 0$}}} & \shortstack[r]{$0.120$\\{\scriptsize $\pm\,0.004$}} & \shortstack[r]{$0.064$\\{\scriptsize $\pm\,0.009$}} & \shortstack[r]{$0.202$\\{\scriptsize $\pm\,0.010$}} & \shortstack[r]{$0.086$\\{\scriptsize $\pm\,0.016$}} & \shortstack[r]{$0.064$\\{\scriptsize $\pm\,0.005$}} & \shortstack[r]{$0.086$\\{\scriptsize $\pm\,0.011$}} & \shortstack[r]{$0.079$\\{\scriptsize $\pm\,0.017$}}  \\
\addlinespace[0pt]
\shortstack[l]{Embryoid body\\{\scriptsize\vphantom{$\pm 0$}}} & \shortstack[r]{$\mathbf{0.404}$\\{\scriptsize \vphantom{$\pm 0$}}} & \shortstack[r]{$0.441$\\{\scriptsize \vphantom{$\pm 0$}}} & \shortstack[r]{$0.455$\\{\scriptsize $\pm\,0.018$}} & \shortstack[r]{$0.465$\\{\scriptsize $\pm\,0.033$}} & \shortstack[r]{$0.479$\\{\scriptsize $\pm\,0.047$}} & \shortstack[r]{$0.434$\\{\scriptsize $\pm\,0.014$}} & \shortstack[r]{$0.463$\\{\scriptsize $\pm\,0.028$}} & \shortstack[r]{$0.412$\\{\scriptsize $\pm\,0.007$}} & \shortstack[r]{$\underline{0.410}$\\{\scriptsize $\pm\,0.004$}}  \\
\addlinespace[0pt]
\shortstack[l]{Circular Gauss.\\{\scriptsize\vphantom{$\pm 0$}}} & \shortstack[r]{$1.908$\\{\scriptsize \vphantom{$\pm 0$}}} & \shortstack[r]{$0.780$\\{\scriptsize \vphantom{$\pm 0$}}} & \shortstack[r]{$1.665$\\{\scriptsize $\pm\,0.118$}} & \shortstack[r]{$1.663$\\{\scriptsize $\pm\,0.129$}} & \shortstack[r]{$1.677$\\{\scriptsize $\pm\,0.106$}} & \shortstack[r]{$1.716$\\{\scriptsize $\pm\,0.081$}} & \shortstack[r]{$1.587$\\{\scriptsize $\pm\,0.422$}} & \shortstack[r]{$\underline{0.484}$\\{\scriptsize $\pm\,0.007$}} & \shortstack[r]{$\mathbf{0.479}$\\{\scriptsize $\pm\,0.014$}}  \\
\addlinespace[0pt]
\shortstack[l]{Dyngen Tree\\{\scriptsize\vphantom{$\pm 0$}}} & \shortstack[r]{$2.635$\\{\scriptsize \vphantom{$\pm 0$}}} & \shortstack[r]{$2.678$\\{\scriptsize \vphantom{$\pm 0$}}} & \shortstack[r]{$2.683$\\{\scriptsize $\pm\,0.014$}} & \shortstack[r]{$2.715$\\{\scriptsize $\pm\,0.033$}} & \shortstack[r]{$\mathbf{2.561}$\\{\scriptsize $\pm\,0.047$}} & \shortstack[r]{$2.641$\\{\scriptsize $\pm\,0.030$}} & \shortstack[r]{$\underline{2.582}$\\{\scriptsize $\pm\,0.018$}} & \shortstack[r]{$2.624$\\{\scriptsize $\pm\,0.011$}} & \shortstack[r]{$2.657$\\{\scriptsize $\pm\,0.021$}}  \\
\addlinespace[0pt]
\shortstack[l]{Dyngen Cycle\\{\scriptsize\vphantom{$\pm 0$}}} & \shortstack[r]{$0.859$\\{\scriptsize \vphantom{$\pm 0$}}} & \shortstack[r]{$0.750$\\{\scriptsize \vphantom{$\pm 0$}}} & \shortstack[r]{$0.830$\\{\scriptsize $\pm\,0.034$}} & \shortstack[r]{$0.851$\\{\scriptsize $\pm\,0.028$}} & \shortstack[r]{$0.839$\\{\scriptsize $\pm\,0.018$}} & \shortstack[r]{$0.832$\\{\scriptsize $\pm\,0.025$}} & \shortstack[r]{$0.802$\\{\scriptsize $\pm\,0.019$}} & \shortstack[r]{$\mathbf{0.738}$\\{\scriptsize $\pm\,0.019$}} & \shortstack[r]{$\underline{0.738}$\\{\scriptsize $\pm\,0.022$}}  \\
\addlinespace[0pt]
\shortstack[l]{Veres $\beta$-cell\\{\scriptsize\vphantom{$\pm 0$}}} & \shortstack[r]{$8.385$\\{\scriptsize \vphantom{$\pm 0$}}} & \shortstack[r]{$8.560$\\{\scriptsize \vphantom{$\pm 0$}}} & \shortstack[r]{$8.553$\\{\scriptsize $\pm\,0.098$}} & \shortstack[r]{$8.621$\\{\scriptsize $\pm\,0.181$}} & \shortstack[r]{$8.924$\\{\scriptsize $\pm\,0.105$}} & \shortstack[r]{$8.748$\\{\scriptsize $\pm\,0.395$}} & \shortstack[r]{$\mathbf{7.973}$\\{\scriptsize $\pm\,0.058$}} & \shortstack[r]{$\underline{8.153}$\\{\scriptsize $\pm\,0.029$}} & \shortstack[r]{$8.160$\\{\scriptsize $\pm\,0.026$}}  \\
\addlinespace[0pt]
\midrule
Best / 16 & 2 & 1 & 0 & 0 & 1 & 1 & 2 & $\mathbf{5}$ & $\underline{4}$ \\
Top two / 16 & 3 & 2 & 0 & 1 & 2 & 1 & 3 & $\mathbf{11}$ & $\underline{9}$ \\
\bottomrule
\end{tabular}}
\end{table*}

\section{Experiments}
\label{sec:experiments}
\subsection{Experimental setup}
\label{sec:v2protocol}
\paragraph{Data and prediction tasks.}
Ten datasets provide sixteen dataset--protocol settings and 66 held-out snapshots in dimensions $2$--$100$ (\cref{tab:nn_datasets}). They include synthetic trajectories, dyngen simulations \citep{cannoodt2021spearheading}, embryoid body \citep{moon2019visualizing}, Mouse hematopoiesis \citep{weinreb2020lineage}, Veres $\beta$-cell data \citep{veres2019charting}, and CITE-seq \citep{burkhardt2022openproblems,banerjee2025efficient}, following the benchmark sources in \cref{sec:v2app}. Leave-one-out removes each interior snapshot in turn; leave-alternate-out removes every second one jointly.

\paragraph{Baselines and metrics.}
We compare NWSP and NWSP-N with WLR, four-point subdivision, CFM, OT-CFM, SB-CFM, $[\mathrm{SF}]^2$M, and MFM. The primary metric is the exact empirical $W_1$ between the predicted population and the withheld snapshot; we also report $W_2$ and Gaussian-kernel MMD$^2$ (\cref{eq:wp,eq:mmd}). Scores average over holdouts within each seed and then over three seeds, with one sample standard deviation as spread. All metrics use the same models, selected on $W_1$ with the target score excluded (\cref{sec:selection}).

\begin{table*}[!t]
\centering
\small
\caption{\textbf{Effect of the acceleration prior.} Held-out $W_1$ (mean $\pm$ SD over three seeds) for the same generator trained without (Spline-only) or with the population-acceleration penalty. Bold and underline mark the best and second-best mean within each protocol; a dash marks an inapplicable protocol.}
\label{tab:decomp}
\setlength{\tabcolsep}{10pt}
\renewcommand{\arraystretch}{1.0}
\begin{tabular}{@{}lrrrr@{}}
\toprule
& \multicolumn{2}{c}{Leave-one-out} & \multicolumn{2}{c}{Leave-alternate-out} \\
\cmidrule(lr){2-3}\cmidrule(lr){4-5}
Dataset & \spline{} & \textbf{NWSP} & \spline{} & \textbf{NWSP} \\
\midrule
Diverging Gauss. & $\mathbf{1.084}\pm0.011$ & $\underline{1.136}\pm0.017$ & -- & -- \\
Petal & $\underline{0.153}\pm0.004$ & $\mathbf{0.090}\pm0.014$ & $\underline{0.199}\pm0.009$ & $\mathbf{0.086}\pm0.011$ \\
Converging Gauss. & $\underline{0.466}\pm0.025$ & $\mathbf{0.452}\pm0.022$ & -- & -- \\
Embryoid body & $\underline{0.450}\pm0.003$ & $\mathbf{0.350}\pm0.007$ & $\underline{0.457}\pm0.002$ & $\mathbf{0.412}\pm0.007$ \\
CITE-seq & $\underline{30.586}\pm0.052$ & $\mathbf{30.545}\pm0.106$ & -- & -- \\
Circular Gauss. & $\underline{0.432}\pm0.024$ & $\mathbf{0.427}\pm0.015$ & $\mathbf{0.483}\pm0.010$ & $\underline{0.484}\pm0.007$ \\
Dyngen Tree & $\underline{2.510}\pm0.026$ & $\mathbf{2.388}\pm0.031$ & $\underline{2.628}\pm0.008$ & $\mathbf{2.624}\pm0.011$ \\
Dyngen Cycle & $\underline{0.629}\pm0.004$ & $\mathbf{0.607}\pm0.012$ & $\underline{0.747}\pm0.011$ & $\mathbf{0.738}\pm0.019$ \\
Mouse hemat. & $\underline{0.403}\pm0.017$ & $\mathbf{0.399}\pm0.007$ & -- & -- \\
Veres $\beta$-cell & $\underline{8.258}\pm0.007$ & $\mathbf{7.935}\pm0.008$ & $\mathbf{8.105}\pm0.014$ & $\underline{8.153}\pm0.029$ \\
\bottomrule
\end{tabular}
\end{table*}

\subsection{Prediction between observed snapshots}
\label{sec:v2loo}\label{sec:v2losses}
NWSP attains the lowest $W_1$ in five settings and NWSP-N in four; they place in the top two in eleven and nine settings, respectively (\cref{tab:main}). NWSP improves over OT-CFM, SB-CFM, CFM, and $[\mathrm{SF}]^2$M in $14$, $13$, $15$, and $15$ of the sixteen settings, so direct spline sampling is competitive with learned vector-field models while requiring no trajectory integration at inference.

\paragraph{Curved dynamics and single-cell data.}
On Circular Gaussian, leave-one-out $W_1$ is $0.427$ for NWSP and $0.426$ for NWSP-N, versus $0.886$ for the strongest baseline, MFM; with alternate snapshots removed it is $0.484$ and $0.479$, versus $0.780$ for four-point subdivision. Both variants also lead on Dyngen Cycle under both protocols, and on leave-one-out embryoid body, Veres, and CITE-seq they occupy the first two positions under both $W_1$ and $W_2$ (\cref{tab:full_W2}). For embryoid body, $W_1$ falls from the best baseline's $0.397$ to $0.350$ and $0.351$; on CITE-seq the values are $31.06$, $30.55$, and $30.53$, a smaller gain that persists in the $100$-dimensional evaluation space. MMD$^2$ rankings are less uniform and geometric baselines remain strong: subdivision leads on Petal, MFM on Mouse leave-one-out and Veres leave-alternate-out. Dataset-level summaries and held-out point clouds are in \cref{tab:dataset-tests,fig:sc_heldout_abc,fig:sc_heldout_d}.

\paragraph{Full-snapshot coverage.}\label{sec:scalability}
Both variants draw minibatches from all retained observations, whereas the evaluated exact-transport WLR and four-point implementations require $1000$-point subsamples on embryoid body, Mouse, and Veres (five settings); all methods are scored against complete withheld snapshots. This is a coverage advantage, not a measured speedup (\cref{sec:cost}), and differences in training support prevent attributing every comparison to the acceleration prior, which the next experiment isolates.

\subsection{What does the acceleration prior contribute?}
\label{sec:real}\label{sec:v2whatworks}\label{sec:tuning}
The spline-only model already produces $C^2$ paths. \Cref{tab:decomp} asks whether regularizing population acceleration adds predictive value beyond this smoothness; the two arms share architecture and snapshot loss and differ only in the penalty.

Regularization lowers mean $W_1$ in thirteen of sixteen settings, with median relative change $-2.2\%$. The gains are heterogeneous: embryoid body leave-one-out improves from $0.450$ to $0.350$, and Petal from $0.153$ to $0.090$ and from $0.199$ to $0.086$ under the two protocols, so smooth particle paths alone do not supply the same inductive bias. The CITE-seq change is small, and Diverging Gaussian and Veres leave-alternate-out favor spline-only. The largest gains are concentrated in a few settings and depend on the regularization range (\cref{sec:selection}); Petal remains better served by subdivision.

Replacing projected acceleration with raw particle acceleration gives a closer comparison (\cref{tab:rawmatch}). Projection has lower $W_1$ in eleven settings and higher error in five; the median raw-minus-projected change is $+0.8\%$. With different hardware across the two arms and several small differences, this ablation supports neither a universal predictive advantage nor equivalence of the penalties; their geometric distinction needs a test that changes particle paths while keeping the marginal curve fixed.

\subsection{Does adaptation improve acceleration estimation?}
\label{sec:adaptive}\label{sec:lift}\label{sec:analytic}
\begin{table}[!t]
\centering
\footnotesize
\caption{\textbf{Learning the projection space.} Relative acceleration error on analytic reference flows (mean $\pm$ SD over five seeds; lower is better). Each added block has sixteen potentials; the NWSP-N block is trained for $300$ updates. Bold and underline mark the best and second-best means, ties included.}
\label{tab:ana_e1}
\setlength{\tabcolsep}{3pt}
\renewcommand{\arraystretch}{1.0}
\begin{tabular}{@{}lrrrr@{}}
\toprule
$d$ & \ourhead{NWSP} & \shortstack{+16\\ridge} & \shortstack{+16\\untrained} & \ourhead{NWSP-N} \\
\midrule
2 & \shortstack[r]{$0.705$\\{\scriptsize $\pm\,0.046$}} & \shortstack[r]{$0.683$\\{\scriptsize $\pm\,0.056$}} & \shortstack[r]{$\underline{0.306}$\\{\scriptsize $\pm\,0.025$}} & \shortstack[r]{$\mathbf{0.062}$\\{\scriptsize $\pm\,0.004$}} \\
5 & \shortstack[r]{$0.927$\\{\scriptsize $\pm\,0.025$}} & \shortstack[r]{$0.918$\\{\scriptsize $\pm\,0.023$}} & \shortstack[r]{$\underline{0.899}$\\{\scriptsize $\pm\,0.024$}} & \shortstack[r]{$\mathbf{0.137}$\\{\scriptsize $\pm\,0.016$}} \\
10 & \shortstack[r]{$0.962$\\{\scriptsize $\pm\,0.006$}} & \shortstack[r]{$\underline{0.960}$\\{\scriptsize $\pm\,0.008$}} & \shortstack[r]{$\underline{0.960}$\\{\scriptsize $\pm\,0.006$}} & \shortstack[r]{$\mathbf{0.199}$\\{\scriptsize $\pm\,0.016$}} \\
30 & \shortstack[r]{$\underline{0.975}$\\{\scriptsize $\pm\,0.002$}} & \shortstack[r]{$\underline{0.975}$\\{\scriptsize $\pm\,0.002$}} & \shortstack[r]{$\underline{0.975}$\\{\scriptsize $\pm\,0.002$}} & \shortstack[r]{$\mathbf{0.341}$\\{\scriptsize $\pm\,0.021$}} \\
\bottomrule
\end{tabular}
\end{table}

On analytic reference flows, acceleration is known independently of the fitted dictionary. \Cref{tab:ana_e1} compares the fixed dictionary with three equal-size additions: sixteen ridge features, sixteen untrained neural potentials, or sixteen learned potentials. Projection and error evaluation use independent samples, and results average five seeds; matched sizes isolate learning from dictionary expansion.

After $300$ potential updates, relative acceleration error falls from $0.705$ to $0.062$ at $d=2$ and from $0.975$ to $0.341$ at $d=30$. Learned potentials outperform both matched-size additions at every displayed dimension; at $d=30$ all three non-adaptive dictionaries have error $0.975$, so the improvement is attributable to adaptation rather than feature count. Resolution remains imperfect: error increases with dimension, and adding $64$ learned potentials does not uniformly improve on $16$. Longer optimization can be more useful than a larger block (\cref{tab:ana_budget}).

\paragraph{Testing the differentiated projection.}
A time-dependent Gaussian latent rotation preserves every marginal but raises the raw particle energy from $1.42$ to $1.45\times10^5$ as its rate goes from zero to sixteen, whereas the projected energy varies by at most $2.3\%$ through rate four at the largest sample size (\cref{sec:diagnostics}); exact invariance is a population statement, not a finite-batch guarantee. On the periodic density curve of \cref{sec:periodiccalc}, the moving-Gram correction reduces the absolute $L^2(\rho_t)$ acceleration error from $0.577$ to about $7\times10^{-14}$, and the uncorrected error persists under Fourier refinement: velocity resolution cannot repair an incorrectly differentiated projection.

\section{Discussion and Conclusion}
\label{sec:limitations}\label{sec:conclusion}
NWSP decouples how a population is sampled from how its motion is regularized. The spline generator yields samples and analytic derivatives without cross-time transport, and the differentiated projection charges acceleration to the marginal curve rather than to unidentifiable particle paths, so every representation of the same curve receives the same penalty. Empirically, NWSP is highly competitive with transport- and flow-based baselines, ranking top two in eleven of sixteen settings while training on full snapshots, and its acceleration prior improves on spline smoothness in thirteen of sixteen settings while remaining competitive with raw particle acceleration. Learned potentials sharply improve acceleration estimation on controlled flows (relative error $0.705\to0.062$ at $d=2$), and NWSP-N attains the lowest $W_1$ in four settings, showing that the prior can be adapted within the same framework.

The guarantees show that invariance holds for frozen dictionaries and the Fourier rates assume exact moments and regular positive densities, so extending both to the alternating neural optimization is a natural next step. By design, NWSP identifies marginal curves rather than individual cell histories, and it shifts the bottleneck from full-support transport to dictionary estimation, which reaches $5186$ fixed features at $d=100$ (\cref{sec:cost}). Sparser dictionaries and matched runtime and memory comparisons are promising directions for scaling further.

\clearpage
\nobalance
\subsection*{AI use statement}
Generative AI tools were used to assist with drafting portions of the manuscript and revising the text for clarity, organization, consistency checks and readability. The authors take full responsibility for the final manuscript, including its originality, technical claims, reported results, and references.
\bibliographystyle{apalike}
\bibliography{nwsp}

\clearpage
\appendix
\thispagestyle{empty}
\onecolumn
\aistatstitle{Neural Wasserstein Splines: Representation-Invariant\\
Second-Order Priors for Trajectory Inference:\\
Supplementary Materials}

\section{Implementation and Evaluation}
\label{sec:implementation}\label{sec:v2app}\label{sec:repro}
\expandafter\def\csname compactcriticlr\endcsname{0.003}
\expandafter\def\csname compactcriticlrjoint\endcsname{0.001}
\expandafter\def\csname compactcriticevery\endcsname{1}
\begin{table}[!htbp]
\centering
\footnotesize
\caption{\textbf{Datasets and held-out tasks.} State dimension $d$, snapshots $T$, held-out targets $|\mathcal H|$ under leave-one-out (loo) and leave-alternate-out (alt), and fixed-dictionary size $M_0$; NWSP-N adds $K=16$ potentials. A dash marks an inapplicable protocol.}
\label{tab:nn_datasets}\label{tab:v2_datasets}
\begin{tabular}{lrrrrr}
\toprule
Dataset & $d$ & $T$ & $|\mathcal H|_{\mathrm{loo}}$ & $|\mathcal H|_{\mathrm{alt}}$ & $M_0$ \\
\midrule
Circular Gaussian & 2 & 10 & 8 & 4 & 41 \\
Dyngen Cycle & 10 & 15 & 13 & 7 & 101 \\
Dyngen Tree & 10 & 7 & 5 & 3 & 101 \\
Petal & 2 & 5 & 3 & 2 & 41 \\
Converging Gaussian & 2 & 4 & 2 & -- & 41 \\
Diverging Gaussian & 2 & 4 & 2 & -- & 41 \\
Embryoid body & 2 & 5 & 3 & 2 & 41 \\
Mouse hematopoiesis & 2 & 3 & 1 & -- & 41 \\
Veres $\beta$-cell & 30 & 8 & 6 & 3 & 531 \\
CITE-seq & 100 & 4 & 2 & -- & 5186 \\
\bottomrule
\end{tabular}
\end{table}

 We use $d_z=\min(d,16)$ and an affine component plus a multilayer network, initialized from snapshot means indexed by $\mathcal{R}$ and pooled covariance. Nine datasets follow the WLR, PACE, and BranchSBM benchmarks \citep{banerjee2025efficient,yu2026pace,tang2026branched}; the tenth is the $100$-dimensional CITE-seq time course.

\paragraph{Dataset provenance.}\label{sec:assets}
We load Circular Gaussian, Diverging Gaussian, Petal, Converging Gaussian, Dyngen Tree, Dyngen Cycle, and CITE-seq through the released WLR benchmark loader \citep{banerjee2025efficient}. Petal is adapted from MIOFlow \citep{huguet2022manifold}, and Converging Gaussian from Wasserstein--Fisher--Rao splines \citep{clancy2022wasserstein}. The WLR authors generated Dyngen Tree and Cycle using the dyngen simulator \citep{cannoodt2021spearheading}; the released ten-dimensional representations use PCA for Tree and PHATE for Cycle \citep{moon2019visualizing}.

Embryoid body originates from \citet{moon2019visualizing}. We use the two-dimensional PHATE coordinates in \texttt{eb\_velocity\_v5.npz} distributed with PACE \citep{yu2026pace}; this processed dataset is also distributed with TrajectoryNet \citep{tong2020trajectorynet}. Mouse hematopoiesis originates from \citet{weinreb2020lineage}; we load the two-dimensional release supplied with BranchSBM \citep{tang2026branched}, whose data source is DeepRUOT \citep{zhang2025deepruot}. The thirty-dimensional Veres data \citep{veres2019charting} are likewise loaded from BranchSBM's released CSV.

CITE-seq uses the WLR release of the NeurIPS 2022 Open Problems time-course data \citep{burkhardt2022openproblems,banerjee2025efficient}, represented in 100 dimensions and subsampled by the released loader to 1000 cells per snapshot. \citet{stoeckius2017simultaneous} introduced the CITE-seq assay; that assay reference is distinct from the source of this time-course dataset.

\paragraph{Licenses, data use, and code.}
A related processed CITE-seq archive \citep{tong2023processed} is licensed CC BY 4.0. Our code is released under the MIT license. We reuse public numerical cell-state data and collect no new human data; model inputs contain expression-derived coordinates, not participant names or contact information. The code repository provides NWSP and NWSP-N, pinned dependencies, a YAML configuration, and synthetic toy datasets. It demonstrates both methods but does not reproduce all benchmark tables.

\paragraph{Fixed settings.}\label{sec:nnimpl}\label{sec:methods_budget}
The generator uses a trainable linear map plus a three-hidden-layer, width-128 SiLU network. Its initial output follows the natural spline through retained snapshot means; the linear part uses the first $d_z$ columns of the pooled covariance square root. The loss is given in \cref{eq:loss}. Training uses double precision and 3000 Adam steps at learning rate $3\times10^{-3}$, with cosine decay and gradient clipping at 10. Each step uses projection and independent energy batches of 256, two stratified time samples ($|\mathcal T|=2$), and fitting batches of $\min(128,n_{\min})$ from every retained snapshot, where $n_{\min}=\min_{k\in\mathcal R}n_k$. The first 300 iterations fit snapshots without the acceleration penalty. The acceleration weight is then increased linearly over 20\% of the remaining iterations. We set $\gamma=0.1$, training-loss MMD bandwidths $0.25$--$4$ times the pooled retained-data median distance, and Sinkhorn hyperparameter $\varepsilon=\max(10^{-3},0.02\bar s)$, where $\bar s$ is the mean summed coordinate variance of retained snapshots. The dictionary has all linear and quadratic potentials plus 36 sine/cosine features across three frequency scales, scaled to unit retained-data gradient energy; ridge parameter $\lambda=10^{-4}$. NWSP-N adds a tanh network of depth 2, width 64 and 16 outputs, which are the learned potentials. Gradients and Hessian--vector products use analytic chain rules, including third activation derivatives when backpropagating into the generator.

\paragraph{Alternating updates.}
When acceleration regularization is used, NWSP-N pretrains its potentials after the 300-step generator warmup for 200 steps at learning rate \compactcriticlr, then makes \compactcriticevery{} potential update per generator step at rate \compactcriticlrjoint. Each potential update uses two independent batches at each sampled time and the symmetric estimate below. The generator update holds potential parameters fixed. Generator normalization constants are instead pooled over four sampled times, refreshed every 25 steps, and held fixed during time differentiation. Computational cost is discussed in \cref{sec:cost}.

\paragraph{Potential-learning estimate.}
At a sampled time, draw two independent batches $\{(X_t^{(j,i)},U_t^{(j,i)})\}_{i=1}^{B_\psi}$, $j\in\{1,2\}$. For learned potential $k$, normalize its gradient by
\[
 s_k=\left[\frac1{2B_\psi}\sum_{j=1}^2\sum_{i=1}^{B_\psi}
 \|\nabla g_{\psi,k}(X_t^{(j,i)})\|^2+10^{-12}\right]^{1/2}.
\]
This normalization is differentiated in $\psi$. Write $(\widehat A_j,\widehat b_j)$ for the empirical moments of batch $j$ with these normalized features and $\hat c_j=(\widehat A_j+\lambda I)^{-1}\widehat b_j$. Let
\[
 e_U=\max\left\{\frac1{2B_\psi}\sum_{j=1}^2\sum_{i=1}^{B_\psi}
 \|U_t^{(j,i)}\|^2,\epsilon_U\right\}.
\]
We fix $\epsilon_U$ at the smallest positive normal double-precision value. The symmetric score is
\begin{equation}
 \widehat{\mathcal Q}_t=\frac{\widehat b_2^\top\hat c_1+\widehat b_1^\top\hat c_2}{2e_U},
 \label{eq:potential_batch}
\end{equation}
and $\widehat{\mathcal Q}=|\mathcal T|^{-1}\sum_{t\in\mathcal T}\widehat{\mathcal Q}_t$ is maximized with $\theta$ fixed. Each batch is used once to fit coefficients and once to evaluate coefficients fitted on the other batch. For generator updates, stored scales are refreshed from 256 samples at each of four stratified times and held fixed during physical-time differentiation. The potential network estimates motion; it does not discriminate between generated and observed data.

\paragraph{Estimation and normalization.}\label{sec:estimation}\label{sec:normconfound}
The independent energy batch in \cref{eq:energy_estimate} prevents reuse of the projection samples, but it does not remove bias from the nonlinear coefficient estimates. The stability result in \cref{thm:stability} concerns moment perturbations; energy quadrature introduces a separate sampling error. The divisor $L^2$ is fixed by retained data rather than a model-dependent energy that may vanish at initialization.

\paragraph{Variants and baselines.}
The unregularized spline fits are shared by NWSP and NWSP-N. Flow baselines train for 10000 steps; their six configurations vary width and noise. MFM uses a locally adaptive normal-distribution metric (LAND) for $d\le10$ and a radial-basis-function metric (RBF) for $d\ge30$, variants OT/I, bandwidths $0.5,1,2$, and 101 Euler steps. WLR and four-point use exact transport at subdivision depth seven. Hardware is described in \cref{sec:compute}. With $K=0$, the NWSP-N training procedure reproduces 0 of 118 reference NWSP fits exactly (median per-run difference 12.9\%). These discrepancies limit attribution of small between-method differences.

\subsection{Hyperparameters and evaluation protocol}
\label{sec:selection}\label{sec:deviations}\label{sec:realdetails}
\paragraph{Hyperparameter search.}
NWSP selects the acceleration weight from $\{0,10^{-8},10^{-6},10^{-4},10^{-2},1\}$, covering the unregularized spline and several orders of magnitude of regularization strength. NWSP-N uses the same grid except on CITE-seq, where it considers $\{0,10^{-4}\}$. Each baseline is evaluated over six candidate configurations.

\paragraph{Baseline implementation.}
On embryoid body, Mouse, and Veres, unequal-size large snapshots made exact-transport subdivision infeasible at full support. WLR and four-point therefore use independent $1000$-point retained-snapshot subsamples, with three draws; all withheld targets remain at full support. Equal-size CITE-seq uses exact full-support transport. For MFM, nonfinite RBF precisions are replaced with the median finite precision; alternate-out fits use retained snapshots only and integrate to each gap's midpoint. These baseline completions were made after initial results were available. Veres alternate-out MFM ran on CPU rather than GPU.

\subsection{Prediction metrics}

\paragraph{Scores of one prediction.} Let $\hat\mu=\sum_{i=1}^{n}\mu_i\delta_{x_i}$ be a prediction and $\hat\nu_h=\frac1m\sum_{j=1}^{m}\delta_{y_j}$
the withheld snapshot of holdout $h$. NWSP and the flow baselines predict $n=m$ points with $\mu_i=1/n$; WLR and four-point subdivision
return a weighted measure, scored with its weights. We compare prediction and snapshot in dataset coordinates and in
full dimension, computing Wasserstein distances exactly as the linear programs
\begin{equation}
  W_p(\hat\mu,\hat\nu_h)^p=\min_{\pi\in\Pi(\mu,\nu)}\sum_{i,j}\pi_{ij}\,\norm{x_i-y_j}^p,\qquad p\in\{1,2\},
  \label{eq:wp}
\end{equation}
The norm is Euclidean, $\nu_j=1/m$ and $\Pi(\mu,\nu)$ is the set of couplings of the two weight vectors. We solve the programs with the network simplex method and verify that it did not stop at its iteration cap. The MMD is the plug-in
estimate averaged over a ladder of Gaussian kernels,
\begin{equation}
  \mathrm{MMD}^2(\hat\mu,\hat\nu_h)=\frac15\sum_{\beta}\Big[\sum_{i,i'}\mu_i\mu_{i'}k_\beta(x_i,x_{i'})+\sum_{j,j'}\nu_j\nu_{j'}k_\beta(y_j,y_{j'})
  -2\sum_{i,j}\mu_i\nu_jk_\beta(x_i,y_j)\Big],
  \label{eq:mmd}
\end{equation}
with $k_\beta(x,y)=\exp(-\norm{x-y}^2/2\beta^2)$ and $\beta\in\{0.25,0.5,1,2,4\}\,\varsigma$, where $\varsigma$ is the standard deviation of all
coordinates of the withheld snapshot. The tables report this averaged squared discrepancy. Embryoid body scores use PHATE coordinates standardized with retained frames, so they are not comparable with other datasets, whose scores use all data coordinates.

We average holdouts within each seed, then seeds 0--2; each spread is one sample SD of the three seed averages. Subsampled transport methods use three subsample draws; deterministic methods have no seed SD.

\FloatBarrier
\section{Computational Cost}
\label{sec:cost}
This section collects the computational properties of NWSP and NWSP-N that are referred to in the main text. We report what the construction costs and what we measured; we did not run matched runtime or memory comparisons against the baselines, so the comparison with transport-based methods below concerns scaling, not measured speed.

\paragraph{What does not depend on the snapshots.}
The snapshot loss compares generated and observed samples at the same time with a Sinkhorn divergence and an MMD term, using minibatches of $\min(128,n_{\min})$ points per retained snapshot. No transport coupling between different times is constructed or stored. In contrast, constructing a transport spline or a Wasserstein subdivision requires couplings between full snapshots, whose cost matrices alone have size quadratic in the snapshot sizes. In our benchmark this is why the evaluated exact-transport implementations of WLR and four-point subdivision needed 1000-point subsamples on three datasets (\cref{sec:selection}), whereas NWSP and NWSP-N were trained with minibatches from the full snapshots. This is a difference in the data that can be used, not a measured speedup.

\paragraph{Cost of the acceleration penalty.}
Let $B$ be the projection batch size, $M=M_0+K$ the number of potentials (\cref{eq:nndict}), and $w$ the width of the learned potential network. At each sampled time, forming the moments $\widehat A,\widehat b,\widehat{\dot A},\widehat{\dot b}$ and factoring $\widehat A+\lambda I$ costs $O(BdM^2+M^3)$ in the dense implementation, plus $O(B(w^2d+wdK))$ for a depth-two potential network. The second linear system \eqref{eq:solve_cdot} reuses the factorization of the first \eqref{eq:solve_c}, so it adds only $O(M^2)$, and the generator's gradients are propagated through both solves. Storing the feature evaluations and the Gram matrix takes $O(BdM+M^2)$ memory, apart from generator activations and derivative buffers. These costs depend on the minibatch and the dictionary, not on the number of observed points per snapshot. With $\lambda_a=0$ the projection and potential-update steps are skipped, and training reduces to the snapshot loss with the spline generator.

\paragraph{Dictionary size is the practical limit.}
The fixed dictionary contains all linear and quadratic potentials plus 36 sine/cosine features, so $M_0=d+d(d+1)/2+36$ grows quadratically in the dimension: $41$ at $d=2$, $531$ at $d=30$, and $5186$ at $d=100$ (\cref{tab:nn_datasets}). At $d=100$ the dense Gram matrix and factorization therefore dominate the cost. Reducing this burden, for example by a low-rank or sparse dictionary or by adapting the dictionary size to the batch size, is a natural direction; the stability bound in \cref{thm:stability} indicates how large the moment errors may be for a target accuracy.

\paragraph{Computing infrastructure.}\label{sec:compute}
The main CPU benchmark used AMD EPYC 9754 processors (128 cores per processor) for NWSP and AMD EPYC 9654 processors (96 cores per processor) for NWSP-N. CPU fits used eight numerical threads for NWSP and one for NWSP-N; these are per-process settings, not the total core counts of the machines. The Veres and CITE-seq fits for both methods used NVIDIA H100 GPUs with 80\,GB memory. NWSP-N GPU workers used one to three CPU threads, depending on the job. Some GPU allocations hosted multiple independent workers. The fresh-seed checks retained the corresponding CPU environments; the additional eight-thread NWSP-N check is reported separately.

The raw-acceleration ablation used EPYC 9654 and EPYC 9754 CPU workers and NVIDIA A100 GPUs with 40\,GB or 80\,GB memory, with two completed H100 tasks also included. The A100 hosts used EPYC 7713 and EPYC 7V13 processors, respectively. Raw-acceleration workers used eight numerical threads.

\paragraph{Inference.}
Sampling evaluates $C_\theta$ and the spline basis. It needs no potential fitting, no numerical integration, and no access to the training snapshots.

\FloatBarrier
\section{Proofs and Approximation Conditions}
\label{sec:proofs}\label{sec:standing}
\paragraph{Standing condition (R).}
The lift $X_t$ is almost surely $C^2$ in time, with $U_t=\dot X_t$ and $W_t=\ddot X_t$, and the potentials are $C^3$ in space. On each compact time interval, assume integrable random envelopes for $f_j(X_t)$, its first two pathwise time derivatives, and the entries of $F(X_t)^\top F(X_t)$ and their first time derivatives. In particular, the envelopes cover $\nabla f_j\cdot U_t$, $U_t^\top\nabla^2 f_j U_t+\nabla f_j\cdot W_t$, and the products in \cref{eq:dmoments}. These conditions justify the exchanges of time differentiation and expectation used below. For the least-squares interpretation, assume square-integrable path velocities; whenever acceleration energies are used, assume those energies are finite. None of these finite-dictionary identities requires a density for $\rho_t$.

All dictionaries and feature scales in this appendix are held fixed during time differentiation. Vector norms are Euclidean, matrix norms are induced operator norms, and field norms are $L^2(\rho_t)$ unless otherwise indicated.
\subsection{Representation invariance}
\begin{proof}[Proof of \cref{prop:invariance}]
The chain rule and (R) give
\[
 A_{ij}=\int\nabla f_i\cdot\nabla f_j\,d\rho_t,\qquad
 b_j=\frac{d}{dt}\int f_j\,d\rho_t,\qquad
 \dot b_j=\frac{d^2}{dt^2}\int f_j\,d\rho_t.
\]
Differentiating the first identity gives $\dot A$. Thus two representations of the same marginal curve have the same four moments. Since $A+\lambda I\succeq \lambda I$, both linear systems have unique solutions. The fields $v_M,a_M$ and their population energies consequently coincide. The argument applies unchanged to a frozen learned dictionary.
\end{proof}

\Needspace{17\baselineskip}
\subsection{Stability of the two solves}
\begin{theorem}[Stability under moment errors]
\label{thm:stability}
At a fixed time, let $H=A+\lambda I$ and $\kappa=\lambda_{\min}(H)\ge \lambda$. Use the same dictionary $F$, ridge $\lambda$, and evaluation measure for the perturbed system. Hats denote perturbed moments and the solutions of the two corresponding linear systems; in particular, $\widehat{\dot c}$ denotes the second solution. Write $\delta_A=\|\hat A-A\|<\kappa$, with analogous notation for the other moment errors. Set
\begin{align}
 R_0&=\frac{\delta_b+\delta_A\|c\|}{\kappa-\delta_A},\label{eq:R0}\\
 R_1&=\frac{\delta_{\dot b}+\delta_{\dot A}\|c\|+(\|\dot A\|+\delta_{\dot A})R_0+\delta_A\|\dot c\|}{\kappa-\delta_A}.\label{eq:R1}
\end{align}
Then $\|\hat c-c\|\le R_0$ and $\|\widehat{\dot c}-\dot c\|\le R_1$. Wherever $\|F(x)\|\le K_0$ and $\|D_xF(x)[u]\|\le K_1\|u\|$,
\begin{equation}
 \|\hat a_M-a_M\|\le K_0R_1+K_0K_1(2\|c\|+R_0)R_0.
 \label{eq:accbound}
\end{equation}
\end{theorem}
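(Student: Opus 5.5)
The plan is a standard perturbation argument for the two linear solves, followed by a field-level bound obtained by subtracting the two expressions for $a_M$ in \cref{eq:accel}. Write $\hat H=\hat A+\lambda I$. Since $H$ is symmetric with $\lambda_{\min}(H)=\kappa$ and $\|\hat H-H\|=\delta_A<\kappa$, Weyl's inequality gives $\lambda_{\min}(\hat H)\ge\kappa-\delta_A>0$. Here I take the perturbed moment $\hat A$ symmetric, as it is for the empirical Gram matrix. Hence $\hat H$ is invertible with $\|\hat H^{-1}\|\le(\kappa-\delta_A)^{-1}$. If symmetry of $\hat A$ is not assumed, I would instead use the Neumann series $\hat H^{-1}=(I+H^{-1}(\hat H-H))^{-1}H^{-1}$, which gives the same constant.

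For the first solve, subtract $Hc=b$ from $\hat H\hat c=\hat b$ to get the residual identity
\[
\hat H(\hat c-c)=(\hat b-b)-(\hat A-A)c .
\]
Applying $\|\hat H^{-1}\|$ then gives $\|\hat c-c\|\le R_0$ directly.

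For the second solve, the exact system is $H\dot c=\dot b-\dot Ac$ by \cref{eq:solve_cdot}, and the perturbed one is $\hat H\widehat{\dot c}=\widehat{\dot b}-\widehat{\dot A}\hat c$. Subtracting yields
\[
\hat H(\widehat{\dot c}-\dot c)=(\widehat{\dot b}-\dot b)-(\widehat{\dot A}\hat c-\dot Ac)-(\hat A-A)\dot c .
\]
Split the middle term as $\widehat{\dot A}\hat c-\dot Ac=(\widehat{\dot A}-\dot A)c+\widehat{\dot A}(\hat c-c)$. Bound $\|\widehat{\dot A}\|\le\|\dot A\|+\delta_{\dot A}$ and use the first-solve estimate $\|\hat c-c\|\le R_0$. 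Multiplying by $\|\hat H^{-1}\|$ then reproduces exactly the numerator of $R_1$. The only real choice here is the order of this splitting: pairing $\widehat{\dot A}$ with $\hat c-c$, rather than $\dot A$ with it, is what yields the factor $(\|\dot A\|+\delta_{\dot A})R_0$ in the statement.

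For the acceleration field, subtract the two instances of \cref{eq:accel} pointwise:
\[
\hat a_M-a_M=F(x)(\widehat{\dot c}-\dot c)+\bigl(D_xF(x)[F(x)\hat c]\hat c-D_xF(x)[F(x)c]c\bigr).
\]
The first term is pointwise at most $K_0R_1$.

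The second term is the main obstacle, because $D_xF(x)[u]q$ is bilinear in $(u,q)$ and we need a telescoping that produces exactly $(2\|c\|+R_0)R_0$. With $e=\hat c-c$, write it as
\[
D_xF[Fe]\hat c+D_xF[Fc]e .
\]
The first piece is bounded by $K_1K_0\|e\|(\|c\|+\|e\|)$ and the second by $K_1K_0\|c\|\|e\|$. Summing these and inserting $\|e\|\le R_0$ gives $K_0K_1(2\|c\|+R_0)R_0$ pointwise wherever the envelope bounds hold. To get $\|D_xF(x)[u]q\|\le K_1\|u\|\|q\|$, I read the hypothesis on $D_xF$ as a bound on the operator norm of the matrix $D_xF(x)[u]$, and I use $\|F(x)q\|\le K_0\|q\|$.

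Finally, combine the two terms with the triangle inequality and take the $L^2$ norm over the evaluation measure, which is the same for both systems by hypothesis. The constant pointwise bound then passes to the $L^2(\rho_t)$ norm, giving \cref{eq:accbound}.
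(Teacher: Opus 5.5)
Your proposal is correct and follows the paper's proof. It uses the same residual identities for $\hat H(\hat c-c)$ and $\hat H(\widehat{\dot c}-\dot c)$ and the same pairing of $\widehat{\dot A}$ with $\hat c-c$; your telescoping $D_xF[Fe]\hat c+D_xF[Fc]e$ is the mirror of the paper's $D_xF[F\hat c]e+D_xF[Fe]c$ and gives the identical bound, while your Weyl/Neumann-series justification of $\|\hat H^{-1}\|\le(\kappa-\delta_A)^{-1}$ spells out a step the paper only asserts.
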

Since $\kappa\ge\lambda$, the condition $\delta_A<\kappa$ holds whenever the Gram-matrix error is below the ridge parameter. The experiments reported here do not measure $\|\hat A-A\|$ or $\lambda_{\min}(H)$, so the condition is a hypothesis of the theorem, not a verified property of the reported runs.
\begin{proof}
Let $\Delta A=\hat A-A$, define the other moment differences similarly, and put $e=\hat c-c$, $g=\widehat{\dot c}-\dot c$. The inverse of $\hat H=\hat A+\lambda I$ has norm at most $(\kappa-\delta_A)^{-1}$. Subtracting the two pairs of normal equations yields
\[
 \hat He=\Delta b-\Delta A c,\qquad
 \hat Hg=\Delta\dot b-\Delta\dot A c-\widehat{\dot A}e-\Delta A\dot c.
\]
The inverse bound gives $R_0,R_1$. Finally,
$\hat a_M-a_M=Fg+D_xF[F\hat c]e+D_xF[Fe]c$,
which implies \cref{eq:accbound}. If $D=\|\hat a_M-a_M\|_{L^2(\rho_t)}$, the error in squared energy is at most $(2\|a_M\|_{L^2(\rho_t)}+D)D$.
\end{proof}

\subsection{Fourier approximation}
\label{sec:consistencyproof}
\begin{theorem}[Consistency for Fourier trial spaces]
\label{thm:consistency}
On the $d$-dimensional torus $\T^d$ with normalized Lebesgue measure, let $\rho\in C^1([0,1];L^\infty(\T^d))$ satisfy $0<m\le\rho_t\le M_\rho$, and let the tangent velocity $v\in C^1([0,1];H^s(\T^d))$, $s>1+d/2$, be the gradient of a periodic potential. Let $V_N$ be the span of the gradient Fourier modes with $0<\|k\|_\infty\le N$, expressed in an unweighted $L^2$-orthonormal basis. Let $v_N$ be the ridge projection onto $V_N$ and $a_N$ its material derivative. With exact moments and a time-independent ridge $0\le \lambda_N\le C_rN^{-s}$,
\begin{align}
 \sup_t\big(\|v_N-v\|_{L^2}+\|\partial_t v_N-\partial_t v\|_{L^2}\big)&\le CN^{-s},\label{eq:vrate}\\
 \sup_t\|a_N-a\|_{L^2(\rho_t)}&\le CN^{1-s}.\label{eq:arate}
\end{align}
The integrated acceleration energies converge at rate $O(N^{1-s})$. Here $N$ is a frequency cutoff, not the number $M$ of dictionary columns.
\end{theorem}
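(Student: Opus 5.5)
The plan is to treat the ridge projection as a uniformly coercive Galerkin problem on $V_N$, compare it with the unweighted Fourier truncation $P_Nv$, and then pass derivatives through with inverse (Bernstein) inequalities.

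\textbf{Operator form and velocity rate.} Write $Q_N$ for the unweighted $L^2$-orthogonal projection onto $V_N$. Because the basis is unweighted-orthonormal, the coefficient penalty satisfies $\|q\|=\|w\|_{L^2}$ for $w\in V_N$. Hence the ridge normal equations read $(G_t+\lambda_N)v_N=Q_N(\rho_t v)$, where $G_t w=Q_N(\rho_t w)$ on $V_N$. Since $m\le\rho_t\le M_\rho$, the operator $G_t$ is self-adjoint with $\langle G_tw,w\rangle\ge m\|w\|^2$ and $\|G_t\|\le M_\rho$.

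Set $e_N=v_N-P_Nv\in V_N$. Then
\[
(G_t+\lambda_N)e_N=Q_N\big(\rho_t(v-P_Nv)\big)-\lambda_NP_Nv .
\]
This gives $\|e_N\|\le m^{-1}\big(M_\rho\|v-P_Nv\|+\lambda_N\|v\|\big)\le CN^{-s}$, using the spectral tail bound $\|v-P_Nv\|\le N^{-s}\|v\|_{H^s}$ and $\lambda_N\le C_rN^{-s}$. I would deliberately avoid the naive variational comparison: it only yields $N^{-s/2}$, because the ridge term enters squared-rootwise.

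\textbf{Time derivative.} I would differentiate the identity for $e_N$ in $t$, which is legitimate because $\lambda_N$ is time-independent and $\rho\in C^1L^\infty$. This gives
\[
(G_t+\lambda_N)\partial_te_N=-\dot G_te_N+Q_N\big(\partial_t\rho_t(v-P_Nv)+\rho_t\,\partial_t(v-P_Nv)\big)-\lambda_NP_N\partial_tv,
\]
with $\dot G_tw=Q_N(\partial_t\rho_t w)$ bounded uniformly in $t$. Using $\partial_t v\in C^0H^s$ and $P_N\partial_t=\partial_tP_N$, every term on the right is $O(N^{-s})$. Coercivity then gives $\|\partial_te_N\|\le CN^{-s}$. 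Combined with the tails of $v$ and $\partial_tv$, this proves \eqref{eq:vrate}. This step is where I expect the main care to be needed. One must check that the exact-moment ridge solution is $C^1$ in $t$, which follows from the $C^1$ dependence of $G_t$ and the uniform invertibility of $G_t+\lambda_N$. One must also keep all constants uniform in $t$ and $N$.

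\textbf{Acceleration.} I would split
\[
a_N-a=\partial_t(v_N-v)+D(v_N-v)\,v_N+Dv\,(v_N-v).
\]
The first term is $O(N^{-s})$ by the previous step.

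For the gradient terms, Bernstein's inequality on trigonometric polynomials gives $\|De_N\|\le CN\|e_N\|$, and the tail estimate gives $\|D(v-P_Nv)\|\le N^{1-s}\|v\|_{H^s}$. Together these yield $\|D(v_N-v)\|_{L^2}\le CN^{1-s}$.

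Sup-norm control comes from $s>1+d/2$. Sobolev embedding makes $\|Dv\|_\infty$ bounded. The inverse inequality $\|e_N\|_\infty\le CN^{d/2}\|e_N\|$, together with the absolutely summable Fourier tail of $v$, gives $\|v_N-v\|_\infty\le CN^{d/2-s}$. Hence $\|v_N\|_\infty$ is bounded uniformly in $N$ and $t$.

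The three terms are therefore $O(N^{1-s})$ in $L^2$. Since $\rho_t\le M_\rho$, the $L^2(\rho_t)$ norm is at most $M_\rho^{1/2}$ times the $L^2$ norm, which proves \eqref{eq:arate}.

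\textbf{Energies.} Let $D_t=\|a_N-a\|_{L^2(\rho_t)}$. Then $\big|\|a_N\|^2_{\rho_t}-\|a\|^2_{\rho_t}\big|\le(2\|a\|_{\rho_t}+D_t)D_t$, exactly as at the end of the proof of \cref{thm:stability}. Here $\|a\|_{\rho_t}$ is bounded uniformly in $t$ by the regularity of $v$. Integrating over $t\in[0,1]$ yields the $O(N^{1-s})$ rate for the integrated acceleration energies.
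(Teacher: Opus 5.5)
Your proposal is correct and follows essentially the same route as the paper. Both compare $v_N$ with the Fourier truncation $P_Nv$ using coercivity of the weighted system, differentiate the normal equations using the time-independent basis and ridge, apply inverse (Bernstein) and Sobolev bounds, and finish with the same three-term decomposition of $a_N-a$ and the same energy inequality. Your operator form $(G_t+\lambda_N)$ and your explicit check that the ridge solution is $C^1$ in $t$ only repackage the paper's bilinear-form argument a little more carefully.
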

\begin{proof}
Write $\|\cdot\|_2$ for the unweighted spatial norm, let $P_N$ be the unweighted Fourier projection onto $V_N$, and define $\mathfrak b_t(u,w)=\int\rho_tu\cdot w$. Constants below are independent of $t$ and $N$. Orthonormality makes the coefficient ridge equal to $\lambda_N\|v_N\|_2^2$, so the normal equations are
\[
 \mathfrak b_t(v_N,w)+\lambda_N\langle v_N,w\rangle
 =\mathfrak b_t(v,w),\qquad w\in V_N.
\]
For $u=v,\partial_t v$, Fourier truncation gives
$\|(I-P_N)u\|_2\le CN^{-s}$ and $\|(I-P_N)u\|_{H^1}\le CN^{1-s}$.

\emph{Velocity.} Put $q_N=v_N-P_Nv$. Subtract the equation for $P_Nv$ and test with $q_N$:
\[
 m\|q_N\|_2^2\le
 \bigl(M_\rho\|v-P_Nv\|_2+\lambda_N\|P_Nv\|_2\bigr)\|q_N\|_2.
\]
Hence $\|q_N\|_2\le CN^{-s}$. For $q\in V_N$, the inverse bounds are
$\|q\|_{H^1}\le CN\|q\|_2$ and $\|q\|_\infty\le CN^{d/2}\|q\|_2$.
Together with truncation and Sobolev embedding, they give, for $e_N=v_N-v$,
\[
 \|e_N\|_2\le CN^{-s},\qquad
 \|e_N\|_{H^1}\le CN^{1-s},\qquad
 \|v_N\|_\infty\le C.
\]

\emph{Time derivative.} Because the basis and ridge are time-independent, differentiation gives
\[
 \mathfrak b_t(\partial_t v_N,w)+\lambda_N\langle\partial_t v_N,w\rangle
 =\mathfrak b_t(\partial_t v,w)-\int\partial_t\rho_t\,e_N\cdot w.
\]
Test with $q_N'=\partial_t v_N-P_N\partial_t v$. Coercivity bounds its norm by
\[
 \|q_N'\|_2\le C\bigl(
 \|(I-P_N)\partial_t v\|_2+\lambda_N\|P_N\partial_t v\|_2
 +\|\partial_t\rho_t\|_\infty\|e_N\|_2\bigr)\le CN^{-s}.
\]
This proves \cref{eq:vrate} after adding the truncation error.

\emph{Acceleration and energy.} For $e_M=v_M-v$, the exact decomposition
\begin{equation}
 a_M-a=\partial_t e_M+D_xe_M\,v_M+D_xv\,e_M
 \label{eq:accdecomp}
\end{equation}
implies
\[
 \|a_N-a\|_2\le\|\partial_t e_N\|_2
 +\|D_xe_N\|_2\|v_N\|_\infty
 +\|D_xv\|_\infty\|e_N\|_2\le CN^{1-s}.
\]
The derivative bound follows from $s>1+d/2$. Equivalence of weighted and unweighted norms proves \cref{eq:arate}. Finally,
\[
 \big|\|a_N\|_{\rho_t}^2-\|a\|_{\rho_t}^2\big|
 \le\bigl(2\|a\|_{\rho_t}+\|a_N-a\|_{\rho_t}\bigr)\|a_N-a\|_{\rho_t}.
\]
Uniform bounds and integration over $[0,1]$ give the energy rate.
\end{proof}

\subsection{Why the moving-Gram correction cannot be dropped}
\label{sec:periodiccalc}
\begin{corollary}[Error from omitting the moving-Gram correction]
\label[corollary]{cor:gram_bias}
Let $\dot c^{\rm omit}$ solve $(A+\lambda I)\dot c^{\rm omit}=\dot b$ and set
$a_M^{\rm omit}=F\dot c^{\rm omit}+D_xF[Fc]c$. For any fixed dictionary with an invertible system,
\begin{equation}
 a_M^{\rm omit}-a_M=F(A+\lambda I)^{-1}\dot A c.
 \label{eq:omit_bias}
\end{equation}
Under the assumptions of \cref{thm:consistency}, at each fixed time,
\[
 a_N^{\rm omit}\longrightarrow
 a+\Pi_{\rho_t}\bigl[(\partial_t\rho_t/\rho_t)v\bigr]
 \quad\text{in }L^2(\rho_t),
\]
where $\Pi_{\rho_t}$ projects onto the closure of gradients of smooth periodic potentials. The additional term need not vanish.
\end{corollary}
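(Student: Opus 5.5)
The identity \cref{eq:omit_bias} is pure linear algebra, so I will dispose of it first. By \cref{eq:solve_cdot} the corrected coefficient derivative is $\dot c=(A+\lambda I)^{-1}(\dot b-\dot A c)$, while by definition $\dot c^{\rm omit}=(A+\lambda I)^{-1}\dot b$. The transport term $D_xF[Fc]c$ is common to $a_M$ and $a_M^{\rm omit}$ and cancels. Hence
\[
a_M^{\rm omit}-a_M=F\bigl(\dot c^{\rm omit}-\dot c\bigr)=F(A+\lambda I)^{-1}\dot A c .
\]
This uses only invertibility of $A+\lambda I$ and a fixed dictionary.

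For the limit, I will work in the setting of \cref{thm:consistency}, with $V_N$ the Fourier gradient space and $\mathfrak b_t(u,w)=\int\rho_t\,u\cdot w$. The first step is to identify the bias field $\beta_N:=F(A+\lambda_N I)^{-1}\dot A c\in V_N$ variationally. Since $\dot A_{ij}=\int\partial_t\rho_t\,\nabla f_i\cdot\nabla f_j$ and $v_N=Fc$, the field $\beta_N$ is the unique element of $V_N$ satisfying
\[
\mathfrak b_t(\beta_N,w)+\lambda_N\langle\beta_N,w\rangle=\int\partial_t\rho_t\,v_N\cdot w,\qquad w\in V_N .
\]
In words, $\beta_N$ is the ridge-regularised $\rho_t$-weighted Galerkin projection onto $V_N$ of the field $g:=(\partial_t\rho_t/\rho_t)v$, with the right-hand side perturbed by replacing $v$ with $v_N$. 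The target $g$ lies in $L^2(\rho_t)$, because $\partial_t\rho_t\in L^\infty$, $\rho_t\ge m$, and $v\in H^s\subset L^\infty$.

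Next I compare $\beta_N$ with the exact weighted projection $\Pi_N g$ onto $V_N$, taken with $\lambda=0$. Subtracting the two variational equations and testing with $\beta_N-\Pi_N g$ gives, by coercivity,
\[
m\,\|\beta_N-\Pi_N g\|_2\le\|\partial_t\rho_t\|_\infty\|v_N-v\|_2+\lambda_N\|\Pi_N g\|_2 .
\]
Both terms are $O(N^{-s})$ by \cref{eq:vrate}, using $\lambda_N\le C_rN^{-s}$ and the uniform bound on $\|\Pi_N g\|_2$.

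It then remains to show $\Pi_N g\to\Pi_{\rho_t}g$ in $L^2(\rho_t)$. This is the main obstacle, because $g$ need not be smooth, so no rate is available. The plan is to use that the subspaces $V_N$ are nested and that their union is dense, in $L^2(\rho_t)$, in the closure of smooth periodic gradients. Density follows because Fourier truncation converges for any smooth potential and the weighted and unweighted norms are equivalent. Since $\Pi_N$ is the orthogonal projection in $L^2(\rho_t)$ onto an increasing family with dense union, $\Pi_N g\to\Pi_{\rho_t}g$ by the standard monotone-projection argument.

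Combining the three steps with \cref{eq:arate} gives
\[
a_N^{\rm omit}=a_N+\beta_N\to a+\Pi_{\rho_t}g .
\]
To show the extra term need not vanish, I will exhibit an example, such as the periodic curve of \cref{sec:periodiccalc}: take $d=1$, where every periodic mean-zero field is a gradient, so that $\Pi_{\rho_t}g=g-\text{const}$. Choosing $\rho_t$ and $v$ with $\partial_t\rho_t\,v/\rho_t$ nonconstant then gives a nonzero limit bias.
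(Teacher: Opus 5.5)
Your convergence argument is correct and is essentially the paper's. The paper characterizes the same bias field $z_N=F(A+\lambda_N I)^{-1}\dot A c$ by the same variational equation. Instead of passing through the exact weighted Galerkin projection $\Pi_N g$, it proves one quasi-optimality bound, $\|z_N-z\|_{L^2}\le C(\|z-w_N\|_{L^2}+\|v_N-v\|_{L^2}+\lambda_N\|w_N\|_{L^2})$ for arbitrary $w_N\in V_N$, and then lets $w_N\to z=\Pi_{\rho_t}g$ by density. Your split is equivalent. It is slightly more informative, because it shows that everything except $\|\Pi_N g-\Pi_{\rho_t}g\|_{\rho_t}=\inf_{w\in V_N}\|\Pi_{\rho_t}g-w\|_{\rho_t}$ is $O(N^{-s})$.

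The one incorrect step is in the nonvanishing example. On the circle, the closure of periodic gradients is indeed the set of fields with zero Lebesgue mean. However, $\Pi_{\rho_t}$ is the $L^2(\rho_t)$-orthogonal projection onto that set. Its orthogonal complement in $L^2(\rho_t)$ consists of fields $h$ with $\rho_t h$ constant. Hence $\Pi_{\rho_t}g=g-\kappa/\rho_t$ with $\kappa=\int g\,dx\big/\int\rho_t^{-1}\,dx$. Subtracting a constant gives the unweighted projection, not the weighted one. So your criterion ``$g$ nonconstant'' does not follow from your formula: in fact $\Pi_{\rho_t}g=0$ exactly when $\partial_t\rho_t\,v$ is constant.

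The conclusion is easy to rescue in either of two ways.
\begin{itemize}
\item Evaluate at $t=0$, as the paper does, for $\rho_t=1+\eta\sin(2\pi t)\cos x$. There $\rho_0\equiv1$, and $g=-2\pi^2\eta^2\sin(2x)$ is already mean-zero, hence its own projection and nonzero.
\item Note that $v\in H^s\subset C^1$ has zero mean, so it vanishes somewhere. Together with $\partial_t\rho_t\in L^\infty$, this forces any constant value of $\partial_t\rho_t\,v$ to be $0$. Hence $\Pi_{\rho_t}g\neq0$ whenever $g\neq0$.
\end{itemize}
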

\begin{proof}
Subtract the corrected and uncorrected coefficient equations to obtain \cref{eq:omit_bias}. In the Fourier setting, write the right-hand side as $z_N$. Its variational equation is
\[
 \int\rho_tz_N\cdot w+\lambda_N\langle z_N,w\rangle
 =\int\partial_t\rho_t\,v_N\cdot w,\qquad w\in V_N.
\]
Set $g=(\partial_t\rho_t/\rho_t)v$ and $z=\Pi_{\rho_t}g$. For any $w_N\in V_N$, subtract the projection identity for $z$ and test with $z_N-w_N$. Coercivity and bounded $\partial_t\rho_t$ give
\[
 \|z_N-z\|_{L^2}\le C\bigl(
 \|z-w_N\|_{L^2}+\|v_N-v\|_{L^2}+\lambda_N\|w_N\|_{L^2}\bigr).
\]
The Fourier spaces are dense in the periodic gradient space. Choosing $w_N\to z$, using $v_N\to v$ and $\lambda_N\to0$, and applying norm equivalence proves the limit.

For an explicit nonzero limit, take $\rho_t(x)=1+\alpha(t)\cos x$ on the circle, with $\alpha(t)=\eta\sin(2\pi t)$ and $0<\eta<1$. The periodic gradient velocity $v=-\alpha'(t)\sin x/\rho_t$ satisfies the continuity equation. At $t=0$, $\rho_0=1$ and
\[
 (\partial_t\rho_t/\rho_t)v\big|_{t=0}
 =-\alpha'(0)^2\sin x\cos x=-2\pi^2\eta^2\sin(2x).
\]
This is already a periodic gradient, so its projection is itself and is nonzero. Even an exact velocity fit cannot remove the omitted derivative term.
\end{proof}

\paragraph{Periodic numerical check.}
The numerical check uses $x\in[0,2\pi)$ with reference measure $dx/(2\pi)$, density $\rho_t(x)=1+0.3\sin(2\pi t)\cos x$, velocity $v=-\partial_t[0.3\sin(2\pi t)]\sin x/\rho_t$, zero ridge, and 16,384 quadrature nodes at $t=0.137$. At cutoff $N=16$, the reported absolute error $\|a_N-a\|_{L^2(\rho_t)}$ is $7.3\times10^{-14}$ with the correction and $0.577$ without it. This absolute norm is distinct from the relative-error metric used in the learned-potential experiments.

\subsection{A finite-dictionary resolution example}
\label{sec:mixtureproof}
For $0<m(t)<1$, let $\rho_t=\tfrac12\mathcal N(-m(t),1-m(t)^2)+\tfrac12\mathcal N(m(t),1-m(t)^2)$. Its first two moments are $0$ and $1$, whereas its fourth moment is $3-2m(t)^4$. Nonconstant $m(t)$ therefore changes the distribution without changing the moments of linear or quadratic potentials. For that dictionary, $b=\dot b=0$, hence $c=\dot c=0$ and $a_M=0$. The example separates two properties: an estimator can be representation-invariant and still miss distributional changes outside its projection space. It motivates learning potentials, but does not prove that the implemented learning rule resolves every such change.

\clearpage
\section{Supplementary Prediction Results}
\label{sec:benchmark}
\Cref{tab:full_W2,tab:full_MMD} report $W_2$ and MMD for the same fitted models used in the main $W_1$ comparison. Hyperparameters are not retuned for these metrics. \Cref{tab:dataset-tests} aggregates relative $W_1$ differences within datasets to avoid counting two protocols as independent datasets. These summaries describe the observed score differences; they are not independent replications across protocols.
\begin{table}[!htbp]
\centering
\footnotesize
\caption{\textbf{Held-out $W_2$} (lower is better). Same fitted models, splits, and conventions as \cref{tab:main}.}
\label{tab:full_W2}
\setlength{\tabcolsep}{3.5pt}
\nnfit{%
\begin{tabular}{@{}lrrrrrrrrr@{}}
\toprule
Dataset & WLR & 4-point & OT-CFM & SB-CFM & CFM & {[SF]}$^{2}$M & MFM & \ourhead{NWSP} & \ourhead{NWSP-N} \\
\midrule
\multicolumn{10}{@{}l}{\emph{Leave-one-out}} \\
\shortstack[l]{Diverging Gauss.\\{\scriptsize\vphantom{$\pm 0$}}} & \shortstack[r]{$\underline{1.333}$\\{\scriptsize \vphantom{$\pm 0$}}} & \shortstack[r]{$1.368$\\{\scriptsize \vphantom{$\pm 0$}}} & \shortstack[r]{$1.341$\\{\scriptsize $\pm\,0.039$}} & \shortstack[r]{$1.351$\\{\scriptsize $\pm\,0.042$}} & \shortstack[r]{$1.371$\\{\scriptsize $\pm\,0.025$}} & \shortstack[r]{$\mathbf{1.321}$\\{\scriptsize $\pm\,0.027$}} & \shortstack[r]{$1.433$\\{\scriptsize $\pm\,0.099$}} & \shortstack[r]{$1.364$\\{\scriptsize $\pm\,0.016$}} & \shortstack[r]{$1.357$\\{\scriptsize $\pm\,0.027$}}  \\
\addlinespace[2pt]
\shortstack[l]{Petal\\{\scriptsize\vphantom{$\pm 0$}}} & \shortstack[r]{$\underline{0.076}$\\{\scriptsize \vphantom{$\pm 0$}}} & \shortstack[r]{$\mathbf{0.074}$\\{\scriptsize \vphantom{$\pm 0$}}} & \shortstack[r]{$0.084$\\{\scriptsize $\pm\,0.003$}} & \shortstack[r]{$0.083$\\{\scriptsize $\pm\,0.002$}} & \shortstack[r]{$0.193$\\{\scriptsize $\pm\,0.034$}} & \shortstack[r]{$0.109$\\{\scriptsize $\pm\,0.004$}} & \shortstack[r]{$0.102$\\{\scriptsize $\pm\,0.005$}} & \shortstack[r]{$0.123$\\{\scriptsize $\pm\,0.022$}} & \shortstack[r]{$0.116$\\{\scriptsize $\pm\,0.016$}}  \\
\addlinespace[2pt]
\shortstack[l]{Converging Gauss.\\{\scriptsize\vphantom{$\pm 0$}}} & \shortstack[r]{$0.488$\\{\scriptsize \vphantom{$\pm 0$}}} & \shortstack[r]{$0.491$\\{\scriptsize \vphantom{$\pm 0$}}} & \shortstack[r]{$0.472$\\{\scriptsize $\pm\,0.011$}} & \shortstack[r]{$\mathbf{0.466}$\\{\scriptsize $\pm\,0.011$}} & \shortstack[r]{$0.508$\\{\scriptsize $\pm\,0.010$}} & \shortstack[r]{$\underline{0.468}$\\{\scriptsize $\pm\,0.010$}} & \shortstack[r]{$0.471$\\{\scriptsize $\pm\,0.040$}} & \shortstack[r]{$0.496$\\{\scriptsize $\pm\,0.023$}} & \shortstack[r]{$0.524$\\{\scriptsize $\pm\,0.034$}}  \\
\addlinespace[2pt]
\shortstack[l]{Embryoid body\\{\scriptsize\vphantom{$\pm 0$}}} & \shortstack[r]{$0.552$\\{\scriptsize \vphantom{$\pm 0$}}} & \shortstack[r]{$0.525$\\{\scriptsize \vphantom{$\pm 0$}}} & \shortstack[r]{$0.582$\\{\scriptsize $\pm\,0.013$}} & \shortstack[r]{$0.569$\\{\scriptsize $\pm\,0.016$}} & \shortstack[r]{$0.601$\\{\scriptsize $\pm\,0.011$}} & \shortstack[r]{$0.569$\\{\scriptsize $\pm\,0.015$}} & \shortstack[r]{$0.495$\\{\scriptsize $\pm\,0.024$}} & \shortstack[r]{$\mathbf{0.448}$\\{\scriptsize $\pm\,0.002$}} & \shortstack[r]{$\underline{0.454}$\\{\scriptsize $\pm\,0.004$}}  \\
\addlinespace[2pt]
\shortstack[l]{CITE-seq\\{\scriptsize\vphantom{$\pm 0$}}} & \shortstack[r]{$32.598$\\{\scriptsize \vphantom{$\pm 0$}}} & \shortstack[r]{$33.245$\\{\scriptsize \vphantom{$\pm 0$}}} & \shortstack[r]{$33.873$\\{\scriptsize $\pm\,0.106$}} & \shortstack[r]{$33.648$\\{\scriptsize $\pm\,0.182$}} & \shortstack[r]{$34.815$\\{\scriptsize $\pm\,0.577$}} & \shortstack[r]{$35.240$\\{\scriptsize $\pm\,0.272$}} & \shortstack[r]{$32.861$\\{\scriptsize $\pm\,0.127$}} & \shortstack[r]{$\underline{32.013}$\\{\scriptsize $\pm\,0.139$}} & \shortstack[r]{$\mathbf{31.992}$\\{\scriptsize $\pm\,0.130$}}  \\
\addlinespace[2pt]
\shortstack[l]{Circular Gauss.\\{\scriptsize\vphantom{$\pm 0$}}} & \shortstack[r]{$1.975$\\{\scriptsize \vphantom{$\pm 0$}}} & \shortstack[r]{$1.425$\\{\scriptsize \vphantom{$\pm 0$}}} & \shortstack[r]{$1.578$\\{\scriptsize $\pm\,0.067$}} & \shortstack[r]{$1.611$\\{\scriptsize $\pm\,0.025$}} & \shortstack[r]{$1.592$\\{\scriptsize $\pm\,0.052$}} & \shortstack[r]{$1.622$\\{\scriptsize $\pm\,0.048$}} & \shortstack[r]{$0.928$\\{\scriptsize $\pm\,0.119$}} & \shortstack[r]{$\underline{0.484}$\\{\scriptsize $\pm\,0.017$}} & \shortstack[r]{$\mathbf{0.482}$\\{\scriptsize $\pm\,0.017$}}  \\
\addlinespace[2pt]
\shortstack[l]{Dyngen Tree\\{\scriptsize\vphantom{$\pm 0$}}} & \shortstack[r]{$2.533$\\{\scriptsize \vphantom{$\pm 0$}}} & \shortstack[r]{$2.529$\\{\scriptsize \vphantom{$\pm 0$}}} & \shortstack[r]{$2.559$\\{\scriptsize $\pm\,0.056$}} & \shortstack[r]{$2.590$\\{\scriptsize $\pm\,0.109$}} & \shortstack[r]{$\mathbf{2.469}$\\{\scriptsize $\pm\,0.088$}} & \shortstack[r]{$2.555$\\{\scriptsize $\pm\,0.046$}} & \shortstack[r]{$2.608$\\{\scriptsize $\pm\,0.054$}} & \shortstack[r]{$\underline{2.480}$\\{\scriptsize $\pm\,0.027$}} & \shortstack[r]{$2.499$\\{\scriptsize $\pm\,0.007$}}  \\
\addlinespace[2pt]
\shortstack[l]{Dyngen Cycle\\{\scriptsize\vphantom{$\pm 0$}}} & \shortstack[r]{$0.933$\\{\scriptsize \vphantom{$\pm 0$}}} & \shortstack[r]{$0.819$\\{\scriptsize \vphantom{$\pm 0$}}} & \shortstack[r]{$0.712$\\{\scriptsize $\pm\,0.007$}} & \shortstack[r]{$0.719$\\{\scriptsize $\pm\,0.013$}} & \shortstack[r]{$0.761$\\{\scriptsize $\pm\,0.008$}} & \shortstack[r]{$0.721$\\{\scriptsize $\pm\,0.023$}} & \shortstack[r]{$0.702$\\{\scriptsize $\pm\,0.010$}} & \shortstack[r]{$\mathbf{0.640}$\\{\scriptsize $\pm\,0.013$}} & \shortstack[r]{$\underline{0.641}$\\{\scriptsize $\pm\,0.013$}}  \\
\addlinespace[2pt]
\shortstack[l]{Mouse hemat.\\{\scriptsize\vphantom{$\pm 0$}}} & \shortstack[r]{$0.498$\\{\scriptsize \vphantom{$\pm 0$}}} & \shortstack[r]{$0.494$\\{\scriptsize \vphantom{$\pm 0$}}} & \shortstack[r]{$0.493$\\{\scriptsize $\pm\,0.005$}} & \shortstack[r]{$0.483$\\{\scriptsize $\pm\,0.014$}} & \shortstack[r]{$0.611$\\{\scriptsize $\pm\,0.013$}} & \shortstack[r]{$\underline{0.479}$\\{\scriptsize $\pm\,0.017$}} & \shortstack[r]{$\mathbf{0.449}$\\{\scriptsize $\pm\,0.004$}} & \shortstack[r]{$0.491$\\{\scriptsize $\pm\,0.006$}} & \shortstack[r]{$0.489$\\{\scriptsize $\pm\,0.010$}}  \\
\addlinespace[2pt]
\shortstack[l]{Veres $\beta$-cell\\{\scriptsize\vphantom{$\pm 0$}}} & \shortstack[r]{$8.784$\\{\scriptsize \vphantom{$\pm 0$}}} & \shortstack[r]{$8.954$\\{\scriptsize \vphantom{$\pm 0$}}} & \shortstack[r]{$9.131$\\{\scriptsize $\pm\,0.065$}} & \shortstack[r]{$9.294$\\{\scriptsize $\pm\,0.094$}} & \shortstack[r]{$9.705$\\{\scriptsize $\pm\,0.071$}} & \shortstack[r]{$9.138$\\{\scriptsize $\pm\,0.053$}} & \shortstack[r]{$8.720$\\{\scriptsize $\pm\,0.016$}} & \shortstack[r]{$\mathbf{8.645}$\\{\scriptsize $\pm\,0.026$}} & \shortstack[r]{$\underline{8.651}$\\{\scriptsize $\pm\,0.028$}}  \\
\addlinespace[2pt]
\midrule
\multicolumn{10}{@{}l}{\emph{Leave-alternate-out}} \\
\shortstack[l]{Petal\\{\scriptsize\vphantom{$\pm 0$}}} & \shortstack[r]{$\mathbf{0.066}$\\{\scriptsize \vphantom{$\pm 0$}}} & \shortstack[r]{$\underline{0.069}$\\{\scriptsize \vphantom{$\pm 0$}}} & \shortstack[r]{$0.134$\\{\scriptsize $\pm\,0.004$}} & \shortstack[r]{$0.084$\\{\scriptsize $\pm\,0.008$}} & \shortstack[r]{$0.206$\\{\scriptsize $\pm\,0.010$}} & \shortstack[r]{$0.104$\\{\scriptsize $\pm\,0.020$}} & \shortstack[r]{$0.086$\\{\scriptsize $\pm\,0.005$}} & \shortstack[r]{$0.124$\\{\scriptsize $\pm\,0.022$}} & \shortstack[r]{$0.122$\\{\scriptsize $\pm\,0.029$}}  \\
\addlinespace[2pt]
\shortstack[l]{Embryoid body\\{\scriptsize\vphantom{$\pm 0$}}} & \shortstack[r]{$0.534$\\{\scriptsize \vphantom{$\pm 0$}}} & \shortstack[r]{$0.565$\\{\scriptsize \vphantom{$\pm 0$}}} & \shortstack[r]{$0.573$\\{\scriptsize $\pm\,0.007$}} & \shortstack[r]{$0.564$\\{\scriptsize $\pm\,0.037$}} & \shortstack[r]{$0.568$\\{\scriptsize $\pm\,0.036$}} & \shortstack[r]{$0.550$\\{\scriptsize $\pm\,0.008$}} & \shortstack[r]{$0.570$\\{\scriptsize $\pm\,0.025$}} & \shortstack[r]{$\mathbf{0.523}$\\{\scriptsize $\pm\,0.010$}} & \shortstack[r]{$\underline{0.526}$\\{\scriptsize $\pm\,0.010$}}  \\
\addlinespace[2pt]
\shortstack[l]{Circular Gauss.\\{\scriptsize\vphantom{$\pm 0$}}} & \shortstack[r]{$1.921$\\{\scriptsize \vphantom{$\pm 0$}}} & \shortstack[r]{$0.811$\\{\scriptsize \vphantom{$\pm 0$}}} & \shortstack[r]{$1.705$\\{\scriptsize $\pm\,0.125$}} & \shortstack[r]{$1.701$\\{\scriptsize $\pm\,0.131$}} & \shortstack[r]{$1.714$\\{\scriptsize $\pm\,0.109$}} & \shortstack[r]{$1.752$\\{\scriptsize $\pm\,0.083$}} & \shortstack[r]{$1.643$\\{\scriptsize $\pm\,0.416$}} & \shortstack[r]{$\underline{0.538}$\\{\scriptsize $\pm\,0.010$}} & \shortstack[r]{$\mathbf{0.534}$\\{\scriptsize $\pm\,0.016$}}  \\
\addlinespace[2pt]
\shortstack[l]{Dyngen Tree\\{\scriptsize\vphantom{$\pm 0$}}} & \shortstack[r]{$2.726$\\{\scriptsize \vphantom{$\pm 0$}}} & \shortstack[r]{$2.770$\\{\scriptsize \vphantom{$\pm 0$}}} & \shortstack[r]{$2.766$\\{\scriptsize $\pm\,0.019$}} & \shortstack[r]{$2.797$\\{\scriptsize $\pm\,0.031$}} & \shortstack[r]{$\mathbf{2.642}$\\{\scriptsize $\pm\,0.049$}} & \shortstack[r]{$2.718$\\{\scriptsize $\pm\,0.028$}} & \shortstack[r]{$\underline{2.658}$\\{\scriptsize $\pm\,0.017$}} & \shortstack[r]{$2.704$\\{\scriptsize $\pm\,0.013$}} & \shortstack[r]{$2.729$\\{\scriptsize $\pm\,0.023$}}  \\
\addlinespace[2pt]
\shortstack[l]{Dyngen Cycle\\{\scriptsize\vphantom{$\pm 0$}}} & \shortstack[r]{$0.907$\\{\scriptsize \vphantom{$\pm 0$}}} & \shortstack[r]{$0.806$\\{\scriptsize \vphantom{$\pm 0$}}} & \shortstack[r]{$0.869$\\{\scriptsize $\pm\,0.036$}} & \shortstack[r]{$0.890$\\{\scriptsize $\pm\,0.026$}} & \shortstack[r]{$0.865$\\{\scriptsize $\pm\,0.016$}} & \shortstack[r]{$0.875$\\{\scriptsize $\pm\,0.022$}} & \shortstack[r]{$0.828$\\{\scriptsize $\pm\,0.021$}} & \shortstack[r]{$\mathbf{0.764}$\\{\scriptsize $\pm\,0.025$}} & \shortstack[r]{$\underline{0.769}$\\{\scriptsize $\pm\,0.026$}}  \\
\addlinespace[2pt]
\shortstack[l]{Veres $\beta$-cell\\{\scriptsize\vphantom{$\pm 0$}}} & \shortstack[r]{$8.985$\\{\scriptsize \vphantom{$\pm 0$}}} & \shortstack[r]{$9.178$\\{\scriptsize \vphantom{$\pm 0$}}} & \shortstack[r]{$9.208$\\{\scriptsize $\pm\,0.098$}} & \shortstack[r]{$9.241$\\{\scriptsize $\pm\,0.136$}} & \shortstack[r]{$9.645$\\{\scriptsize $\pm\,0.101$}} & \shortstack[r]{$9.318$\\{\scriptsize $\pm\,0.388$}} & \shortstack[r]{$\mathbf{8.557}$\\{\scriptsize $\pm\,0.033$}} & \shortstack[r]{$\underline{8.895}$\\{\scriptsize $\pm\,0.027$}} & \shortstack[r]{$8.907$\\{\scriptsize $\pm\,0.024$}}  \\
\addlinespace[2pt]
\midrule
Best / 16 & 1 & 1 & 0 & 1 & 2 & 1 & 2 & $\mathbf{5}$ & $\underline{3}$ \\
Top two / 16 & 3 & 2 & 0 & 1 & 2 & 3 & 3 & $\mathbf{10}$ & $\underline{8}$ \\
\bottomrule
\end{tabular}}
\end{table}

\begin{table}[!htbp]
\centering
\footnotesize
\caption{\textbf{Held-out MMD$^2$} (lower is better; average over five Gaussian bandwidths). Same fitted models, splits, and conventions as \cref{tab:main}.}
\label{tab:full_MMD}
\setlength{\tabcolsep}{3.5pt}
\nnfit{%
\begin{tabular}{@{}lrrrrrrrrr@{}}
\toprule
Dataset & WLR & 4-point & OT-CFM & SB-CFM & CFM & {[SF]}$^{2}$M & MFM & \ourhead{NWSP} & \ourhead{NWSP-N} \\
\midrule
\multicolumn{10}{@{}l}{\emph{Leave-one-out}} \\
\shortstack[l]{Diverging Gauss.\\{\scriptsize\vphantom{$\pm 0$}}} & \shortstack[r]{$0.278$\\{\scriptsize \vphantom{$\pm 0$}}} & \shortstack[r]{$0.263$\\{\scriptsize \vphantom{$\pm 0$}}} & \shortstack[r]{$0.278$\\{\scriptsize $\pm\,0.008$}} & \shortstack[r]{$0.281$\\{\scriptsize $\pm\,0.008$}} & \shortstack[r]{$0.361$\\{\scriptsize $\pm\,0.002$}} & \shortstack[r]{$0.266$\\{\scriptsize $\pm\,0.003$}} & \shortstack[r]{$0.357$\\{\scriptsize $\pm\,0.037$}} & \shortstack[r]{$\underline{0.197}$\\{\scriptsize $\pm\,0.001$}} & \shortstack[r]{$\mathbf{0.195}$\\{\scriptsize $\pm\,0.001$}}  \\
\addlinespace[2pt]
\shortstack[l]{Petal\\{\scriptsize\vphantom{$\pm 0$}}} & \shortstack[r]{$\underline{0.006}$\\{\scriptsize \vphantom{$\pm 0$}}} & \shortstack[r]{$\mathbf{0.005}$\\{\scriptsize \vphantom{$\pm 0$}}} & \shortstack[r]{$0.007$\\{\scriptsize $\pm\,0.001$}} & \shortstack[r]{$0.007$\\{\scriptsize $\pm\,0.000$}} & \shortstack[r]{$0.048$\\{\scriptsize $\pm\,0.023$}} & \shortstack[r]{$0.011$\\{\scriptsize $\pm\,0.001$}} & \shortstack[r]{$0.018$\\{\scriptsize $\pm\,0.005$}} & \shortstack[r]{$0.010$\\{\scriptsize $\pm\,0.002$}} & \shortstack[r]{$0.009$\\{\scriptsize $\pm\,0.001$}}  \\
\addlinespace[2pt]
\shortstack[l]{Converging Gauss.\\{\scriptsize\vphantom{$\pm 0$}}} & \shortstack[r]{$0.178$\\{\scriptsize \vphantom{$\pm 0$}}} & \shortstack[r]{$0.166$\\{\scriptsize \vphantom{$\pm 0$}}} & \shortstack[r]{$0.173$\\{\scriptsize $\pm\,0.007$}} & \shortstack[r]{$0.163$\\{\scriptsize $\pm\,0.010$}} & \shortstack[r]{$0.194$\\{\scriptsize $\pm\,0.002$}} & \shortstack[r]{$\underline{0.159}$\\{\scriptsize $\pm\,0.009$}} & \shortstack[r]{$\mathbf{0.153}$\\{\scriptsize $\pm\,0.031$}} & \shortstack[r]{$0.169$\\{\scriptsize $\pm\,0.007$}} & \shortstack[r]{$0.181$\\{\scriptsize $\pm\,0.007$}}  \\
\addlinespace[2pt]
\shortstack[l]{Embryoid body\\{\scriptsize\vphantom{$\pm 0$}}} & \shortstack[r]{$0.053$\\{\scriptsize \vphantom{$\pm 0$}}} & \shortstack[r]{$0.051$\\{\scriptsize \vphantom{$\pm 0$}}} & \shortstack[r]{$0.068$\\{\scriptsize $\pm\,0.007$}} & \shortstack[r]{$0.068$\\{\scriptsize $\pm\,0.005$}} & \shortstack[r]{$0.091$\\{\scriptsize $\pm\,0.014$}} & \shortstack[r]{$0.062$\\{\scriptsize $\pm\,0.010$}} & \shortstack[r]{$0.062$\\{\scriptsize $\pm\,0.008$}} & \shortstack[r]{$\mathbf{0.044}$\\{\scriptsize $\pm\,0.002$}} & \shortstack[r]{$\underline{0.045}$\\{\scriptsize $\pm\,0.002$}}  \\
\addlinespace[2pt]
\shortstack[l]{CITE-seq\\{\scriptsize\vphantom{$\pm 0$}}} & \shortstack[r]{$0.007$\\{\scriptsize \vphantom{$\pm 0$}}} & \shortstack[r]{$0.006$\\{\scriptsize \vphantom{$\pm 0$}}} & \shortstack[r]{$0.006$\\{\scriptsize $\pm\,0.000$}} & \shortstack[r]{$\underline{0.006}$\\{\scriptsize $\pm\,0.000$}} & \shortstack[r]{$0.009$\\{\scriptsize $\pm\,0.000$}} & \shortstack[r]{$\mathbf{0.005}$\\{\scriptsize $\pm\,0.000$}} & \shortstack[r]{$0.019$\\{\scriptsize $\pm\,0.004$}} & \shortstack[r]{$0.028$\\{\scriptsize $\pm\,0.001$}} & \shortstack[r]{$0.028$\\{\scriptsize $\pm\,0.001$}}  \\
\addlinespace[2pt]
\shortstack[l]{Circular Gauss.\\{\scriptsize\vphantom{$\pm 0$}}} & \shortstack[r]{$0.273$\\{\scriptsize \vphantom{$\pm 0$}}} & \shortstack[r]{$0.161$\\{\scriptsize \vphantom{$\pm 0$}}} & \shortstack[r]{$0.199$\\{\scriptsize $\pm\,0.015$}} & \shortstack[r]{$0.207$\\{\scriptsize $\pm\,0.005$}} & \shortstack[r]{$0.205$\\{\scriptsize $\pm\,0.014$}} & \shortstack[r]{$0.202$\\{\scriptsize $\pm\,0.014$}} & \shortstack[r]{$0.084$\\{\scriptsize $\pm\,0.021$}} & \shortstack[r]{$\mathbf{0.020}$\\{\scriptsize $\pm\,0.002$}} & \shortstack[r]{$\underline{0.021}$\\{\scriptsize $\pm\,0.001$}}  \\
\addlinespace[2pt]
\shortstack[l]{Dyngen Tree\\{\scriptsize\vphantom{$\pm 0$}}} & \shortstack[r]{$0.109$\\{\scriptsize \vphantom{$\pm 0$}}} & \shortstack[r]{$\mathbf{0.104}$\\{\scriptsize \vphantom{$\pm 0$}}} & \shortstack[r]{$0.116$\\{\scriptsize $\pm\,0.007$}} & \shortstack[r]{$0.116$\\{\scriptsize $\pm\,0.010$}} & \shortstack[r]{$0.120$\\{\scriptsize $\pm\,0.011$}} & \shortstack[r]{$\underline{0.105}$\\{\scriptsize $\pm\,0.004$}} & \shortstack[r]{$0.109$\\{\scriptsize $\pm\,0.006$}} & \shortstack[r]{$0.107$\\{\scriptsize $\pm\,0.003$}} & \shortstack[r]{$0.113$\\{\scriptsize $\pm\,0.003$}}  \\
\addlinespace[2pt]
\shortstack[l]{Dyngen Cycle\\{\scriptsize\vphantom{$\pm 0$}}} & \shortstack[r]{$0.427$\\{\scriptsize \vphantom{$\pm 0$}}} & \shortstack[r]{$0.347$\\{\scriptsize \vphantom{$\pm 0$}}} & \shortstack[r]{$0.311$\\{\scriptsize $\pm\,0.002$}} & \shortstack[r]{$0.314$\\{\scriptsize $\pm\,0.004$}} & \shortstack[r]{$0.326$\\{\scriptsize $\pm\,0.013$}} & \shortstack[r]{$\mathbf{0.254}$\\{\scriptsize $\pm\,0.007$}} & \shortstack[r]{$0.300$\\{\scriptsize $\pm\,0.003$}} & \shortstack[r]{$\underline{0.267}$\\{\scriptsize $\pm\,0.012$}} & \shortstack[r]{$0.268$\\{\scriptsize $\pm\,0.013$}}  \\
\addlinespace[2pt]
\shortstack[l]{Mouse hemat.\\{\scriptsize\vphantom{$\pm 0$}}} & \shortstack[r]{$0.059$\\{\scriptsize \vphantom{$\pm 0$}}} & \shortstack[r]{$0.058$\\{\scriptsize \vphantom{$\pm 0$}}} & \shortstack[r]{$0.062$\\{\scriptsize $\pm\,0.004$}} & \shortstack[r]{$0.062$\\{\scriptsize $\pm\,0.007$}} & \shortstack[r]{$0.112$\\{\scriptsize $\pm\,0.004$}} & \shortstack[r]{$\mathbf{0.056}$\\{\scriptsize $\pm\,0.005$}} & \shortstack[r]{$\underline{0.056}$\\{\scriptsize $\pm\,0.006$}} & \shortstack[r]{$0.060$\\{\scriptsize $\pm\,0.001$}} & \shortstack[r]{$0.059$\\{\scriptsize $\pm\,0.002$}}  \\
\addlinespace[2pt]
\shortstack[l]{Veres $\beta$-cell\\{\scriptsize\vphantom{$\pm 0$}}} & \shortstack[r]{$0.008$\\{\scriptsize \vphantom{$\pm 0$}}} & \shortstack[r]{$\mathbf{0.007}$\\{\scriptsize \vphantom{$\pm 0$}}} & \shortstack[r]{$0.009$\\{\scriptsize $\pm\,0.001$}} & \shortstack[r]{$0.008$\\{\scriptsize $\pm\,0.000$}} & \shortstack[r]{$0.012$\\{\scriptsize $\pm\,0.000$}} & \shortstack[r]{$\underline{0.008}$\\{\scriptsize $\pm\,0.000$}} & \shortstack[r]{$0.015$\\{\scriptsize $\pm\,0.001$}} & \shortstack[r]{$0.022$\\{\scriptsize $\pm\,0.000$}} & \shortstack[r]{$0.023$\\{\scriptsize $\pm\,0.000$}}  \\
\addlinespace[2pt]
\midrule
\multicolumn{10}{@{}l}{\emph{Leave-alternate-out}} \\
\shortstack[l]{Petal\\{\scriptsize\vphantom{$\pm 0$}}} & \shortstack[r]{$\mathbf{0.002}$\\{\scriptsize \vphantom{$\pm 0$}}} & \shortstack[r]{$\underline{0.003}$\\{\scriptsize \vphantom{$\pm 0$}}} & \shortstack[r]{$0.063$\\{\scriptsize $\pm\,0.006$}} & \shortstack[r]{$0.007$\\{\scriptsize $\pm\,0.002$}} & \shortstack[r]{$0.111$\\{\scriptsize $\pm\,0.010$}} & \shortstack[r]{$0.009$\\{\scriptsize $\pm\,0.005$}} & \shortstack[r]{$0.009$\\{\scriptsize $\pm\,0.001$}} & \shortstack[r]{$0.009$\\{\scriptsize $\pm\,0.002$}} & \shortstack[r]{$0.008$\\{\scriptsize $\pm\,0.003$}}  \\
\addlinespace[2pt]
\shortstack[l]{Embryoid body\\{\scriptsize\vphantom{$\pm 0$}}} & \shortstack[r]{$\mathbf{0.045}$\\{\scriptsize \vphantom{$\pm 0$}}} & \shortstack[r]{$0.063$\\{\scriptsize \vphantom{$\pm 0$}}} & \shortstack[r]{$0.060$\\{\scriptsize $\pm\,0.006$}} & \shortstack[r]{$0.071$\\{\scriptsize $\pm\,0.009$}} & \shortstack[r]{$0.095$\\{\scriptsize $\pm\,0.026$}} & \shortstack[r]{$\underline{0.046}$\\{\scriptsize $\pm\,0.004$}} & \shortstack[r]{$0.095$\\{\scriptsize $\pm\,0.012$}} & \shortstack[r]{$0.058$\\{\scriptsize $\pm\,0.004$}} & \shortstack[r]{$0.057$\\{\scriptsize $\pm\,0.002$}}  \\
\addlinespace[2pt]
\shortstack[l]{Circular Gauss.\\{\scriptsize\vphantom{$\pm 0$}}} & \shortstack[r]{$0.270$\\{\scriptsize \vphantom{$\pm 0$}}} & \shortstack[r]{$0.071$\\{\scriptsize \vphantom{$\pm 0$}}} & \shortstack[r]{$0.219$\\{\scriptsize $\pm\,0.022$}} & \shortstack[r]{$0.220$\\{\scriptsize $\pm\,0.026$}} & \shortstack[r]{$0.228$\\{\scriptsize $\pm\,0.019$}} & \shortstack[r]{$0.223$\\{\scriptsize $\pm\,0.015$}} & \shortstack[r]{$0.204$\\{\scriptsize $\pm\,0.080$}} & \shortstack[r]{$\underline{0.033}$\\{\scriptsize $\pm\,0.005$}} & \shortstack[r]{$\mathbf{0.032}$\\{\scriptsize $\pm\,0.004$}}  \\
\addlinespace[2pt]
\shortstack[l]{Dyngen Tree\\{\scriptsize\vphantom{$\pm 0$}}} & \shortstack[r]{$\underline{0.115}$\\{\scriptsize \vphantom{$\pm 0$}}} & \shortstack[r]{$\underline{0.115}$\\{\scriptsize \vphantom{$\pm 0$}}} & \shortstack[r]{$0.125$\\{\scriptsize $\pm\,0.004$}} & \shortstack[r]{$0.116$\\{\scriptsize $\pm\,0.002$}} & \shortstack[r]{$0.143$\\{\scriptsize $\pm\,0.010$}} & \shortstack[r]{$\mathbf{0.111}$\\{\scriptsize $\pm\,0.002$}} & \shortstack[r]{$0.136$\\{\scriptsize $\pm\,0.002$}} & \shortstack[r]{$0.127$\\{\scriptsize $\pm\,0.003$}} & \shortstack[r]{$0.133$\\{\scriptsize $\pm\,0.005$}}  \\
\addlinespace[2pt]
\shortstack[l]{Dyngen Cycle\\{\scriptsize\vphantom{$\pm 0$}}} & \shortstack[r]{$0.428$\\{\scriptsize \vphantom{$\pm 0$}}} & \shortstack[r]{$\mathbf{0.335}$\\{\scriptsize \vphantom{$\pm 0$}}} & \shortstack[r]{$0.412$\\{\scriptsize $\pm\,0.026$}} & \shortstack[r]{$0.430$\\{\scriptsize $\pm\,0.020$}} & \shortstack[r]{$0.427$\\{\scriptsize $\pm\,0.016$}} & \shortstack[r]{$\underline{0.343}$\\{\scriptsize $\pm\,0.020$}} & \shortstack[r]{$0.386$\\{\scriptsize $\pm\,0.019$}} & \shortstack[r]{$0.345$\\{\scriptsize $\pm\,0.008$}} & \shortstack[r]{$0.345$\\{\scriptsize $\pm\,0.007$}}  \\
\addlinespace[2pt]
\shortstack[l]{Veres $\beta$-cell\\{\scriptsize\vphantom{$\pm 0$}}} & \shortstack[r]{$\underline{0.010}$\\{\scriptsize \vphantom{$\pm 0$}}} & \shortstack[r]{$\mathbf{0.009}$\\{\scriptsize \vphantom{$\pm 0$}}} & \shortstack[r]{$0.011$\\{\scriptsize $\pm\,0.000$}} & \shortstack[r]{$0.011$\\{\scriptsize $\pm\,0.003$}} & \shortstack[r]{$0.017$\\{\scriptsize $\pm\,0.001$}} & \shortstack[r]{$0.011$\\{\scriptsize $\pm\,0.004$}} & \shortstack[r]{$0.015$\\{\scriptsize $\pm\,0.002$}} & \shortstack[r]{$0.020$\\{\scriptsize $\pm\,0.000$}} & \shortstack[r]{$0.020$\\{\scriptsize $\pm\,0.000$}}  \\
\addlinespace[2pt]
\midrule
Best / 16 & 2 & $\mathbf{5}$ & 0 & 0 & 0 & $\underline{4}$ & 1 & 2 & 2 \\
Top two / 16 & 5 & $\underline{7}$ & 0 & 1 & 0 & $\mathbf{9}$ & 2 & 5 & 4 \\
\bottomrule
\end{tabular}}
\end{table}

\begin{table}[!htbp]
\centering\footnotesize
\setlength{\tabcolsep}{4pt}
\caption{\textbf{Dataset-level comparison with each baseline.} Relative $W_1$ differences are averaged over protocols within each dataset before counting wins. Columns: win count, median relative change (negative favors ours), and one-sided exact sign-test $p$ (post hoc, uncorrected). Bold and underline compare win counts and medians of our two variants.}
\label{tab:dataset-tests}
\begin{tabular}{@{}lrrrrrr@{}}
\toprule
& \multicolumn{3}{c}{\textbf{NWSP (ours)}} & \multicolumn{3}{c}{\textbf{NWSP-N (ours)}} \\
\cmidrule(lr){2-4}\cmidrule(lr){5-7}
Baseline & wins/$n$ & median & $p$ & wins/$n$ & median & $p$ \\
\midrule
WLR & $\mathbf{9/10}$ & $\mathbf{-2.3}$\% & 0.011 & $\underline{8/10}$ & $\mathbf{-2.3}$\% & 0.055  \\
4-point & $\mathbf{8/10}$ & $\mathbf{-4.2}$\% & 0.055 & $\mathbf{8/10}$ & $\mathbf{-4.2}$\% & 0.055  \\
OT-CFM & $\mathbf{8/10}$ & $\underline{-4.2}$\% & 0.055 & $\mathbf{8/10}$ & $\mathbf{-4.9}$\% & 0.055  \\
SB-CFM & $\mathbf{8/10}$ & $\underline{-5.0}$\% & 0.055 & $\mathbf{8/10}$ & $\mathbf{-5.7}$\% & 0.055  \\
CFM & $\mathbf{9/10}$ & $\underline{-12.8}$\% & 0.011 & $\mathbf{9/10}$ & $\mathbf{-13.5}$\% & 0.011  \\
{[SF]}$^{2}$M & $\mathbf{9/10}$ & $\underline{-4.5}$\% & 0.011 & $\mathbf{9/10}$ & $\mathbf{-7.1}$\% & 0.011  \\
MFM & $\mathbf{7/10}$ & $\mathbf{-2.3}$\% & 0.172 & $\underline{6/10}$ & $\underline{-1.8}$\% & 0.377  \\
\bottomrule
\end{tabular}
\end{table}

\FloatBarrier

\clearpage
\section{Acceleration Estimation and Regularization}
\label{sec:diagnostics}\label{sec:analyticdetails}

\paragraph{Rotating particle representations.} The latent-rotation check evaluates alternative particle representations at $t=0.5$ while preserving the complete marginal curve. The quoted projected-energy variation uses the largest batch size in the supplied evaluation. Rapid rotations make the cancellations in the differentiated moments difficult to estimate. This tests finite-sample estimation of the same population quantity, not held-out prediction or stationarity of the distribution.

\paragraph{Analytic reference curves.} The analytic examples contain rotating three-well gradient flows, blind Gaussian mixtures and rotating lifts of constant clouds. The reference accelerations are evaluated analytically and checked numerically. Potential training uses 300 steps at rate $3\times10^{-3}$, batches of 256 from 20000 samples. Independent projection and evaluation batches have 4000 and 8000 samples. Reported relative errors average over 21 time points. Tables report five-seed means and SDs. These examples also informed method development.
\begin{table}[!htbp]
\centering
\footnotesize
\caption{\textbf{Optimization and dictionary size for NWSP-N.} Relative acceleration error as in \cref{tab:ana_e1} (mean $\pm$ SD over five seeds); the first column repeats the standard configuration, the others use more updates or more learned potentials. Bold and underline mark the best and second-best means.}
\label{tab:ana_budget}
\setlength{\tabcolsep}{10pt}
\renewcommand{\arraystretch}{1.08}
\begin{tabular}{@{}lrrr@{}}
\toprule
\multicolumn{4}{@{}l}{\textbf{NWSP-N (ours): optimization and dictionary size}} \\
$d$ & \shortstack{16 learned\\300 steps} & \shortstack{16 learned\\3000 steps} & \shortstack{64 learned\\300 steps} \\
\midrule
2 & $0.062\pm0.004$ & $\underline{0.038}\pm0.005$ & $\mathbf{0.032}\pm0.003$ \\
5 & $0.137\pm0.016$ & $\mathbf{0.018}\pm0.002$ & $\underline{0.112}\pm0.008$ \\
10 & $0.199\pm0.016$ & $\mathbf{0.019}\pm0.002$ & $\underline{0.195}\pm0.012$ \\
30 & $\underline{0.341}\pm0.021$ & $\mathbf{0.027}\pm0.003$ & $0.347\pm0.021$ \\
\bottomrule
\end{tabular}
\end{table}

The auxiliary controls also identify limits of adaptation. An untrained block beats the trained block on the one-dimensional blind mixture; long constant-rate runs can degrade or diverge. An offline Action-Matching network overtakes NWSP-N between 300 and 3000 steps, reaching errors \ensuremath{3.4{\times}10^{-4}}--\ensuremath{1.0{\times}10^{-3}} at $10^4$ steps. This comparison does not test either estimator inside a moving generator.

\newpage
\subsection{Raw versus projected acceleration}
\label{sec:explore_raw}
The raw-acceleration ablation uses the same generator, training schedule, regularization search, data scale, and seeds as NWSP in all sixteen settings. It was conducted after the main benchmark. Raw acceleration gives the lower/higher $W_1$ in 5/11 settings (median change $+0.8$\%, mean $+2.4$\%). The counts are 4/12 for $W_2$ and 6/10 for MMD. Differences are below $1\%$ in 6 settings. The raw arm ran on different hardware environments, which can move individual scores by about the size of a seed spread.
\begin{table}[H]
\centering
\footnotesize
\caption{\textbf{Population versus particle acceleration.} Held-out $W_1$ (mean $\pm$ SD over three seeds) with the projected penalty (NWSP) or a raw particle-acceleration penalty, otherwise identical. Last column: change from projected to raw (positive favors NWSP). The two arms ran on different hardware.}
\label{tab:rawmatch}
\setlength{\tabcolsep}{4pt}
\begin{tabular}{@{}lrrr@{}}
\toprule
Dataset & \shortstack{NWSP (ours)\\Projected acceleration} & Raw acceleration & Raw vs projected \\
\midrule
\multicolumn{4}{@{}l}{\emph{Leave-one-out}} \\
Diverging Gauss. & $\underline{1.136}${\scriptsize$\,\pm\,0.017$} & $\mathbf{1.110}${\scriptsize$\,\pm\,0.014$} & $-2.3$\% \\
Petal & $\mathbf{0.090}${\scriptsize$\,\pm\,0.014$} & $\underline{0.100}${\scriptsize$\,\pm\,0.011$} & $+11.8$\% \\
Converging Gauss. & $\mathbf{0.452}${\scriptsize$\,\pm\,0.022$} & $\underline{0.498}${\scriptsize$\,\pm\,0.022$} & $+10.1$\% \\
Embryoid body & $\mathbf{0.350}${\scriptsize$\,\pm\,0.007$} & $\underline{0.361}${\scriptsize$\,\pm\,0.006$} & $+3.2$\% \\
CITE-seq & $\underline{30.545}${\scriptsize$\,\pm\,0.106$} & $\mathbf{30.539}${\scriptsize$\,\pm\,0.139$} & $-0.0$\% \\
Circular Gauss. & $\mathbf{0.427}${\scriptsize$\,\pm\,0.015$} & $\underline{0.428}${\scriptsize$\,\pm\,0.018$} & $+0.3$\% \\
Dyngen Tree & $\mathbf{2.388}${\scriptsize$\,\pm\,0.031$} & $\underline{2.418}${\scriptsize$\,\pm\,0.019$} & $+1.3$\% \\
Dyngen Cycle & $\mathbf{0.607}${\scriptsize$\,\pm\,0.012$} & $\underline{0.618}${\scriptsize$\,\pm\,0.004$} & $+1.8$\% \\
Mouse hemat. & $\mathbf{0.399}${\scriptsize$\,\pm\,0.007$} & $\underline{0.425}${\scriptsize$\,\pm\,0.011$} & $+6.5$\% \\
Veres $\beta$-cell & $\mathbf{7.935}${\scriptsize$\,\pm\,0.008$} & $\underline{7.972}${\scriptsize$\,\pm\,0.004$} & $+0.5$\% \\
\midrule
\multicolumn{4}{@{}l}{\emph{Leave-alternate-out}} \\
Petal & $\mathbf{0.086}${\scriptsize$\,\pm\,0.011$} & $\underline{0.091}${\scriptsize$\,\pm\,0.014$} & $+5.9$\% \\
Embryoid body & $\mathbf{0.412}${\scriptsize$\,\pm\,0.007$} & $\underline{0.413}${\scriptsize$\,\pm\,0.010$} & $+0.3$\% \\
Circular Gauss. & $\underline{0.484}${\scriptsize$\,\pm\,0.007$} & $\mathbf{0.483}${\scriptsize$\,\pm\,0.010$} & $-0.1$\% \\
Dyngen Tree & $\mathbf{2.624}${\scriptsize$\,\pm\,0.011$} & $\underline{2.652}${\scriptsize$\,\pm\,0.022$} & $+1.1$\% \\
Dyngen Cycle & $\underline{0.738}${\scriptsize$\,\pm\,0.019$} & $\mathbf{0.724}${\scriptsize$\,\pm\,0.004$} & $-1.9$\% \\
Veres $\beta$-cell & $\underline{8.153}${\scriptsize$\,\pm\,0.029$} & $\mathbf{8.139}${\scriptsize$\,\pm\,0.006$} & $-0.2$\% \\
\bottomrule
\end{tabular}
\end{table}

\FloatBarrier
\clearpage
\section{Qualitative Results}
\label{sec:qualitative_results}
\Cref{fig:traj_synthetic} illustrates fitted synthetic dynamics using all snapshots; it is not a held-out evaluation. \Cref{fig:sc_heldout_abc,fig:sc_heldout_d} show held-out single-cell predictions. Quantitative comparisons use the full evaluation coordinates, not the two-dimensional plotting projections.

\begin{figure}[!htbp]
\centering\footnotesize
\includegraphics[width=0.98\linewidth]{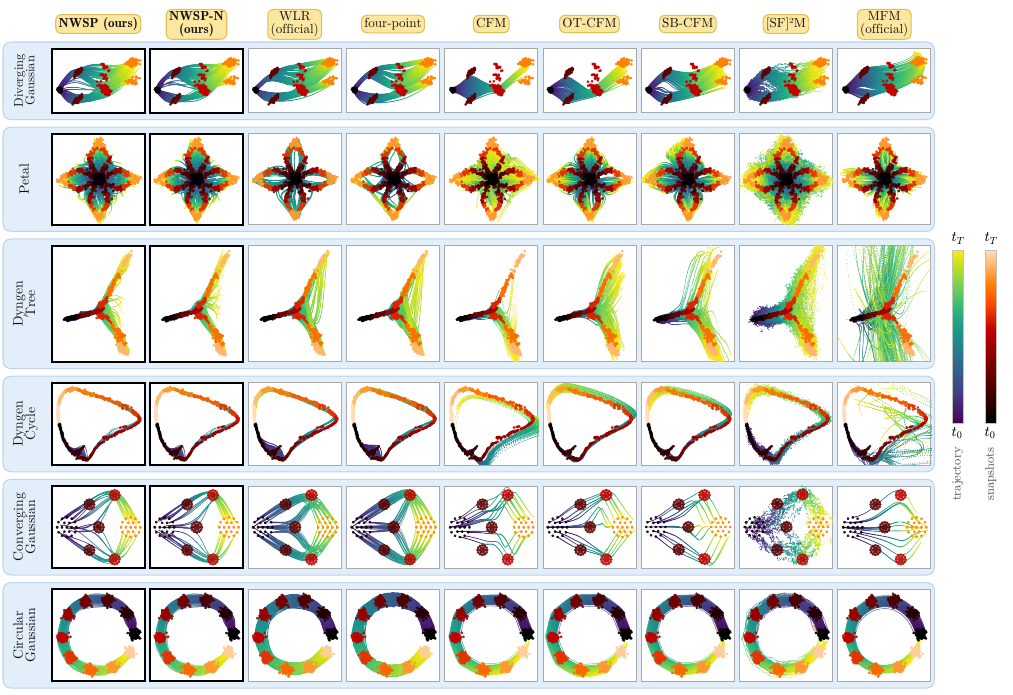}
\caption[All-snapshot trajectories, synthetic datasets]{\textbf{Fitted synthetic population trajectories.} Each model is fitted to all available snapshots; no time point is withheld. Colors indicate time and snapshot index. These plots illustrate the resulting dynamics rather than held-out generalization, which is measured separately in \cref{tab:main}.}
\label{fig:traj_synthetic}\label{sec:qualitative}
\end{figure}

\begin{figure}[!t]
\centering
\includegraphics[height=0.68\textheight]{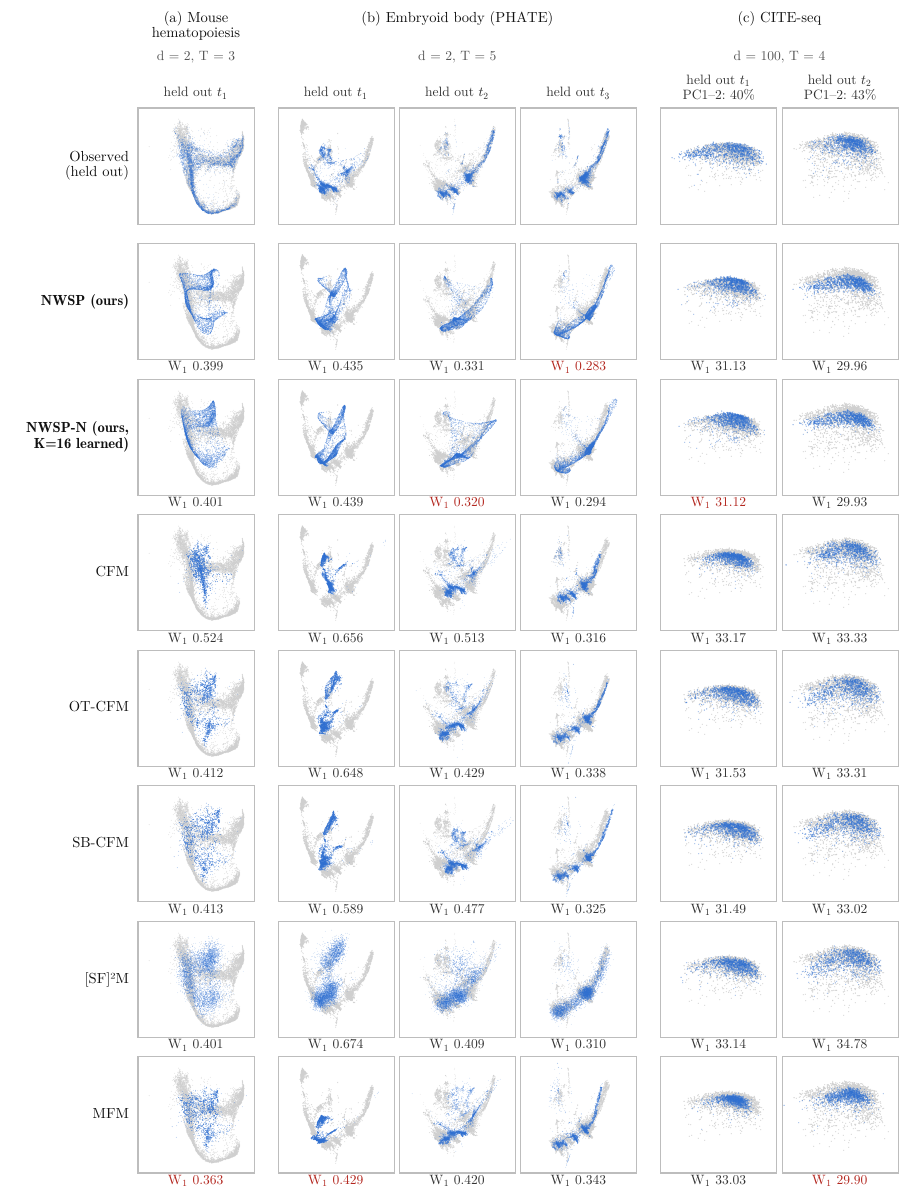}
\caption[Held-out single-cell predictions (Mouse, EB, CITE-seq)]{Held-out snapshots of (a)~Mouse hematopoiesis, (b)~Embryoid body (PHATE)
and (c)~CITE-seq, and each method's prediction of them; every column is a separate leave-one-out fit that never saw its snapshot, and the
number under each panel is that holdout's $W_1$ (red: the lowest of the column). Clouds use seed zero; scores average the three seeds. Gray clouds are retained neighbors. CITE-seq/Veres PCA and Embryoid-body-standardized PHATE use retained frames only. These views do not establish full-dimensional rankings.}
\label{fig:sc_heldout_abc}\label{sec:scdetails}
\end{figure}

\begin{figure}[p]
\centering
\includegraphics[height=0.80\textheight]{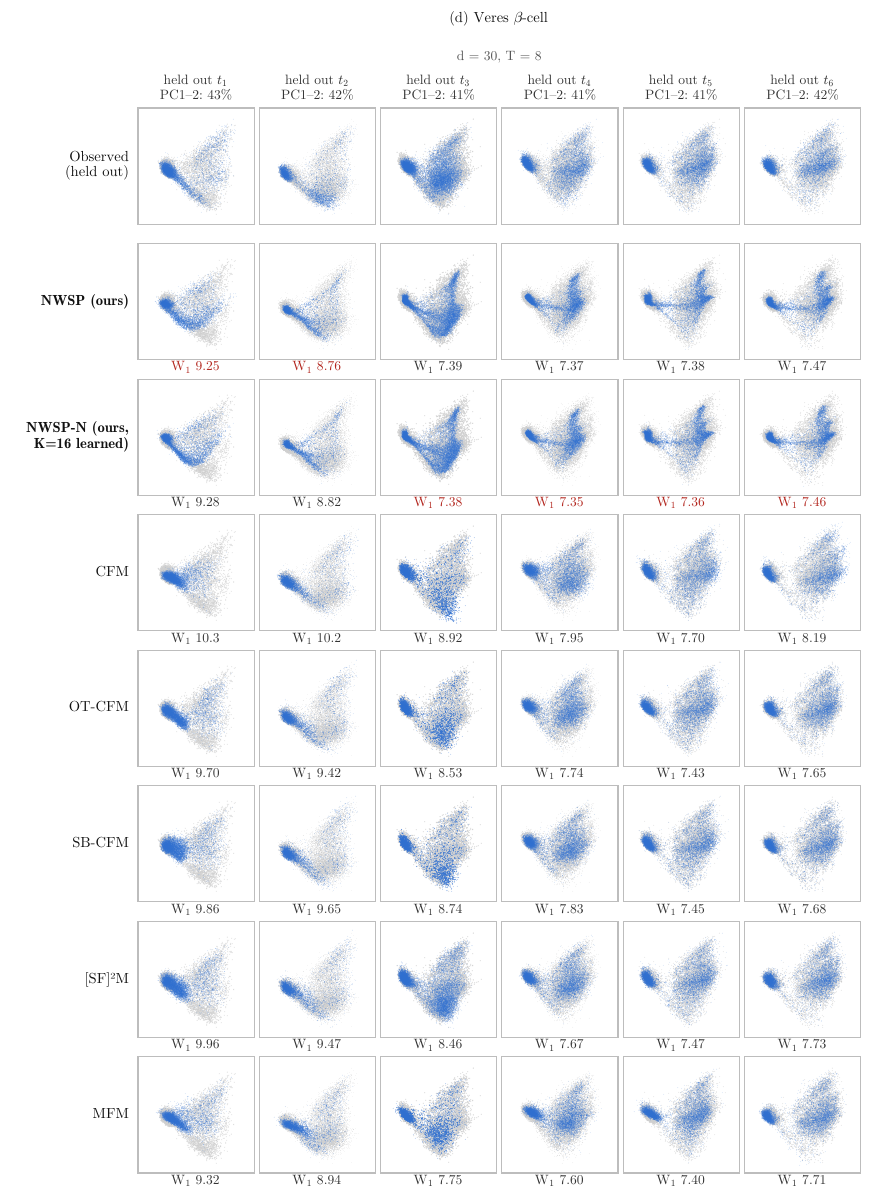}
\caption[Held-out single-cell predictions (Veres)]{Held-out snapshots of (d)~Veres $\beta$-cell and each method's prediction of them, as in
\cref{fig:sc_heldout_abc}.}
\label{fig:sc_heldout_d}
\end{figure}

\end{document}